\theoremstyle{definition}
\newtheorem{thm}{Theorem}
\newtheorem{lmm}{Lemma}
\newtheorem{prp}{Proposition}
\newtheorem{remark}{Remark}
\newcommand{\circler}{\raise0.2ex\hbox{\textcircled{\scriptsize{R}}}}
\def \argmin {\mathrm{argmin}}
\title{ATRO: Adversarial Training with a Rejection Option}
\newcommand*{\affaddr}[1]{#1} 
\newcommand*{\affmark}[1][*]{\textsuperscript{#1}}
\newcommand*{\equalcontribution}[1][*]{\textsuperscript{*}}
\author{%
Masahiro Kato\affmark[1],\ \ \ \ \ Zhenghang Cui\affmark[1,2]\footnotemark[1]\thanks{Equal contributions.},\ \ \ \ \ Yoshihiro Fukuhara\affmark[3]\footnotemark[1]\\
\affaddr{\affmark[1]The Univiersity of Tokyo}\\
\affaddr{\affmark[2]RIKEN}\\
\affaddr{\affmark[3]Waseda University}\\
}
\begin{document}


\maketitle

\begin{abstract}
This paper proposes a classification framework with a \emph{rejection option} to mitigate the performance deterioration caused by \emph{adversarial examples}.
While recent machine learning algorithms achieve high prediction performance, they are empirically vulnerable to adversarial examples, which are slightly perturbed data samples that are wrongly classified.
In real-world applications, adversarial attacks using such adversarial examples could cause serious problems.
To this end, various methods are proposed to obtain a classifier that is robust against adversarial examples.
Adversarial training is one of them, which trains a classifier to minimize the worst-case loss under adversarial attacks.
In this paper, in order to acquire a more reliable classifier against adversarial attacks,
we propose the method of \emph{Adversarial Training with a Rejection Option} (ATRO).
Applying the adversarial training objective to both a \emph{classifier} and a \emph{rejection function} simultaneously, classifiers trained by ATRO can choose to abstain from classification when it has insufficient confidence to classify a test data point.
We examine the feasibility of the framework using the surrogate maximum hinge loss and establish a generalization bound for linear models.
Furthermore, we empirically confirmed the effectiveness of ATRO using various models and real-world datasets.
\end{abstract}

\section{Introduction}
\label{sec:introduction}
Recently developed machine learning algorithms show extraordinary performance on classification tasks. However, they can be mislead to make wrong predictions with high confidence by slightly perturbed data.
Adding small perturbations to data to deteriorate the performance of an algorithm is called an \emph{adversarial attack}, and data with such small perturbations are called \emph{adversarial examples} \citep{43405}.
Several previous studies have succeeded in generating adversarial examples by adding small perturbations that are imperceptible to human \citep{43405,Gu2014TowardsDN,huang2015learning,Carlini:2017:AEE:3128572.3140444}. 

Training a robust classifier against such adversarial examples is an important task and one method to increase the classifier robustness is \emph{adversarial training} \citep{42503,43405,journals/corr/ShahamYN15,Carlini,7546524,DBLP:journals/corr/XuEQ17,madry2018towards,buckman2018thermometer,Kannan2018AdversarialLP,NIPS2018_7709,pmlr-v80-wong18a,tramer2018ensemble,zhang2019you,balunovic2020adversarial}.
However, as reported in existing work, most of the defense methods, including adversarial training, fail to avoid misclassification under adversarial attacks.
For example, \citet{Carlini:2017:AEE:3128572.3140444} defeat representative methods for the detection of adversarial examples.
\citet{pmlr-v80-athalye18b} report that several defense algorithms fail in a white-box setting when the attacker uses a carefully designed gradient-based method.
Moreover, \citet{shafahi2018are} and \citet{pmlr-v97-gilmer19a} show that adversarial examples are inevitable in some cases.
Thus, we are playing a cat‐and‐mouse game between algorithms for adversarial attack and defense. 

In order to conduct a highly robust decision making against adversarial examples, this paper incorporates a \emph{rejection option} into the standard adversarial training framework.
The method of \emph{learning with a rejection option} tries to avoid incorrect predictions by abstaining on some uncertain data \citep{Chow1957AnOC,Chow1970OnOR,46544,NIPS2016_6336,NIPS2017_7073,pmlr-v97-geifman19a}.
An approach to learning with a rejection option is to abstain when a sample has a low confidence score.
\citet{46544} generalize this approach and propose a new algorithm that trains both a classifier and a \emph{rejection function}, which decides whether to reject a sample.
\citet{NIPS2017_7073} and \citet{pmlr-v97-geifman19a} formulate learning with a rejection option in another way.
By extending these methods to adversarial training, our method can abstain from decision making when the classifier does not have sufficient confidence, especially on adversarial samples.
We refer to the proposed method as  \emph{adversarial training with a rejection option} (ATRO). 

This paper has two main contributions. First, we show a novel method for robust classification, which incorporates a rejection option into the adversarial training framework.
We further justify it by establishing a generalization error bound.
Next, we investigate the performances of the proposed method on benchmark datasets using both linear-in-parameter models and deep neural networks.

Several previous studies are related to ATRO.
Since the seminal work of \citet{szegedy2013intriguing}, many defense methods have been proposed to mitigate the classifier vulnerability against adversarial examples.
These defense methods include adversarial training \citep{szegedy2013intriguing, goodfellow2015explaining, madry2018towards}, 
the Bayesian method \citep{ye2018bayesian}, detecting adversarial examples \citep{xu2018feature, lu2017safetynet},  pixel defense \citep{song2018pixeldefend}, generative model-based defense  \citep{jalal2017robust, sun2019enhancing}, regularization-based defense \citep{jakubovitz2018improving, qian2019l2-nonexpansive}, tradeoff-based TRADES \citep{zhang2018theoretically}, feature denoising \citep{xie2019feature}, and the method based on the ordinary differential equation viewpoint \citep{zhang2019towards}.
However, most of these defense methods have been shown to be ineffective against sophisticated adversarial attacks which focus on breaking specific defense methods \citep{carlini2017carlini, athalye2018obfuscated}, leaving the problem of defending against adversarial examples still open.

\section{Problem Setting}
\label{sec:problemsetting}
In this paper, we consider the binary classification setting for simplicity.
The proposed method can be extended to the multi-class classification setting, as explained in Remark~\ref{rem:multi-class}--\ref{rem:selective}.

Let $\mathcal{X}\subset\mathbb{R}^d$ denote the input space, where $d$ is the data dimension.
Same as the standard supervised learning setting, we assume both the training and the test data points are i.i.d. samples drawn from an unknown distribution $\mathcal{D}$ over $\mathcal{X}\times\mathcal{Y}$, where $\mathcal{Y}:=\{-1, +1\}$ as we consider the binary classification setting.
Let $\mathcal{F}:=\{f|f:\mathcal{X}\to\mathbb{R}\}$ denote a hypothesis set.
Our goal is to obtain the optimal classifier $f^* := \argmin_{f\in\mathcal{F}}R(f)$, where $R(f):=\mathbb{E}_\mathcal{D}[\mathcal{L}_{0\mathchar`- 1}(f; \bm{x}, y)]$ denotes the population risk and $\mathcal{L}_{0\mathchar`- 1}(f; \bm{x}, y) := \mathbbm{1}_{yf(\bm{x})\leq 0}$ denotes the zero-one loss function.

\subsection{Adversarial Attack}
From the viewpoint of \emph{threat models}, adversarial attacks can be classified into two kinds: \emph{white-box} and \emph{black-box} \citep{yuan2017adversarial}.
In this paper, we only consider white-box attacks, which means that the attacker knows the classification model parameters.
For a given classifier $f:\mathcal{X}\to\mathcal{Z}$ and a data point $(\bm{x}, y)$,
we define the adversarial perturbation as the minimal perturbation $\boldsymbol{\delta}\in\mathbb{R}^d$ which satisfies $\mathrm{sgn}(f(\boldsymbol{x}+\boldsymbol{\delta}))\neq y_i$ and $d(\boldsymbol{x}+\boldsymbol{\delta}, \boldsymbol{x})\leq\varepsilon$ for some $\varepsilon > 0$, where $d$ is a metric.

When using a linear-in-parameter model, we can calculate adversarial examples analytically \citep{pmlr-v97-yin19b}.
However, when using neural networks, obtaining adversarial examples is known to be NP-hard \citep{awasthi2019on}.
Therefore, we usually resort to heuristic methods, such as the \emph{fast gradient sign method} (FGSM) \citep{43405} and the \emph{projected gradient descent method} (PGD) \citep{madry2018towards}, to generate adversarial examples.

FGSM is one of the simplest methods for generating adversarial examples.
It defines a fast single-step attack that maximizes the loss function in the linear approximation.
The perturbation under FGSM is calculated as $\boldsymbol{\delta}_\mathrm{FGSM} = \varepsilon\cdot\mathrm{sgn} ( \nabla_{\boldsymbol{x}} \mathcal{L} ( f; \bm{x},y ) )$, where $\mathcal{L}$ denotes a loss function, e.g., the cross-entropy loss function.

PGD is a standard first-order optimization method, which executes iterative optimizations to find the adversarial perturbation $\boldsymbol{\delta}$.
The perturbation under PGD at a time step $n+1$ is calculated as $\boldsymbol{\delta}^{(n+1)}_{\mathrm{PGD}} = \mathcal{P}_{\varepsilon} ( \boldsymbol{\delta}^{(n)}_{\mathrm{PGD}} + \alpha \cdot \mathrm{sgn} ( \nabla_{\boldsymbol{x}} \mathcal{L} ( f; \bm{x} + \bm{\delta}^{(n)}_{\mathrm{PGD}},y ) ) )$, where $\alpha$ denotes the step size and $\mathcal{P}_{\varepsilon}$ denotes the projection onto the $\ell_p$-ball of radius $\varepsilon$.

\subsection{Adversarial Training}
Adversarial training was first proposed as a method for learning adversarially robust models by injecting adversarial examples into training data \citep{szegedy2013intriguing, goodfellow2015explaining}. 
Following \citet{madry2018towards}, this paper adopts the min-max formulation of adversarial training. Instead of the standard training process, adversarial training defines an $\varepsilon$-ball $\mathbb{B}(\varepsilon)$ around each training data point and solves the following min-max optimization:
\begin{align}
\label{eq:adversarial_training}
\min_{f\in\mathcal{F}}\frac{1}{n}\sum^n_{i=1}\max_{\bm{x}'\in\mathbb{B}^\infty_{\bm{x}}(\varepsilon)}\mathcal{L}(f; \bm{x}', y).
\end{align}

\section{Adversarial Training with a Rejection Option}
\label{sec:atro}
In this section, we introduce a novel framework of adversarial training that has a rejection option.
First, we explain our motivation for incorporating the rejection option.
Then, we propose our algorithm and establish its corresponding generalization error bound.
However, as shown in \cite{pmlr-v97-yin19b}, the exact generalization error bound can be derived only for the linear-in-parameter models.
For neural networks, we need to restrict the model structure to derive the exact generalization error bounds \citep{pmlr-v97-yin19b}.
Therefore, we derive the detailed generalization error bound only for linear-in-parameter models.

\subsection{Motivation of Rejection}
\label{sec:mot_rej}
The adversarial training minimizing the worst-case loss is proposed to construct a robust classifier against adversarial examples.
However, when the model is flexible, it is difficult to train a robust classifier against all possible adversarial examples.
For example, \cite{NIPS2018_7394} proved there exist adversarial perturbations for any classifier, and it is difficult to construct a robust classifier against all adversarial examples.
Therefore, we consider a strategy to abstain from classifying severe adversarial examples.
The idea of classification with abstention has been formulated as learning with a rejection option \citep{Chow1957AnOC,46544,NIPS2017_7073}.
We extend this framework to adversarial training with a rejection option.

\subsection{Learning with a Rejection Option}
\label{sec:ler_ro}
Let $\circler$ denote the rejection symbol.
For any given data point $\bm{x}\in\mathcal{X}$, the classifier has the option of returning the symbol $\circler$, or assigning a label $\hat{y}\in\{-1, +1\}$.
If the classifier rejects on the instance, it incurs a modest cost of $c\in(0, 1/2)$.
If it assigns an incorrect label, it incurs a cost of one. Otherwise, it incurs no loss.
Thus, we formulate the classifier as a pair of functions $(f, r)$, where $f:\mathcal{X}\to\mathbb{R}$ is the function for predicting the label using $\mathrm{sgn}(f)$ and $r:\mathcal{X}\to\mathbb{R}$ is the function for rejecting the data point when $r(\bm{x})\leq 0$.
\citet{46544} defined the loss function as $\mathcal{L}_{0\mathchar`- 1\mathchar`-c}(r, f; \bm{x}, y) = \mathbbm{1}_{y_if(\bm{x})\leq 0} \mathbbm{1}_{r(\bm{x})\geq 0} + c\mathbbm{1}_{r(\bm{x})\leq 0}$ for any pair of functions $(f, r)$ and a labeled sample $(\bm{x}, y)\in\mathcal{X}\times\{-1, +1\}$.
We assume $c\in (0, \frac12)$.
If $c\geq\frac12$, there is no incentive for rejection. For $c=0$, we can reject all data.

Let $\mathcal{F}$ and $\mathcal{R}$ denote two function families mapping $\mathcal{X}$ to $\mathbb{R}$. The learning problem consists of using a labeled sample $S=((\bm{x}_1, y_1),\dots,(\bm{x}_n, y_n))$ drawn i.i.d. from $\mathcal{D}^n$ to determine a pair $(r, f)\in\mathcal{R}\times\mathcal{F}$ with a small expected rejection loss $R(r, f)$ defined as $R(r, f) = \mathbb{E}_{(\bm{x}, y)\sim\mathcal{D}}\left[\mathbbm{1}_{yf(\bm{x})\leq 0} \mathbbm{1}_{r(\bm{x})\geq 0} + c\mathbbm{1}_{r(\bm{x})\leq 0}\right]$.

\subsection{Adversarial Loss with a Rejection Option}
\label{sec:objective-function}
Following the standard formulation of adversarial training of Eq.~\eqref{eq:adversarial_training}, we define the following adversarial loss for $(\bm{x},y)\in S$:
\begin{equation}
\label{eq:adversarial_loss}
\tilde{\mathcal{L}}_{0\mathchar`- 1\mathchar`-c}(r, f; \bm{x}, y):=\max_{\bm{x}'\in\mathbb{B}^\infty_{\bm{x}}(\varepsilon)} \mathcal{L}_{0\mathchar`- 1\mathchar`-c}\left(r, f; \bm{x}', y\right).
\end{equation}
Our goal is to minimize the adversarial population risk $\widetilde{R}_{0\mathchar`- 1\mathchar`-c}(r, f):=\mathbb{E}\left[\tilde{\mathcal{L}}_{0\mathchar`- 1\mathchar`-c}(r, f; \bm{x}, y)\right]$.
To this end, we empirically approximate the above adversarial risk as $\widetilde{R}_{0\mathchar`- 1\mathchar`-c, \mathcal{S}}(r, f):=\frac{1}{n}\sum^n_{i=1}\tilde{\mathcal{L}}_{0\mathchar`- 1\mathchar`-c}(r, f; \bm{x}_i, y_i)$,
and solve $\min_{(r, f) \in \mathcal{R} \times \mathcal{F}} \widetilde{R}_{0\mathchar`- 1\mathchar`-c, \mathcal{S}}(r, f)$.
We refer to this method as \emph{Adversarial Training with a Rejection Option} (ATRO).

\begin{remark}[Separation-based Approach and Confidence-based Approach]
In this paper, we formulate learning with a rejection option following the separation-based approach, which is a generalization of the confidence-based approach. 
In the confidence-based approach, we can only obtain a rejection function with limited function space.
As \citet{46544,NIPS2016_6336} illustrated, a rejection function learned by the separation-based approach can reject samples that a rejection function learned by the confidence-based approach cannot reject. 
\end{remark}

\begin{remark}[Attack only the Classifier]
We can also consider the case where an attacker only attacks the classifier, not the rejection function. In this case, the adversarial attack is weaker than our main setting. This formulation is also practical and can be obtained easily from our formulation of attack to both classifiers and rejection functions.
\end{remark}

\subsection{Convex Surrogate Loss for ATRO}
\label{sec:conv_loss_atro}
Following \cite{46544,NIPS2016_6336}, we consider substituting the zero-one loss with a convex surrogate loss function. 
Let $u\mapsto \Phi(-u)$ and $u\mapsto \Psi(-u)$ be monotonically increasing $L$-Lipschitz and convex functions upper-bounding $\mathbbm{1}_{u\leq 0}$.
Because $\max(a,b)=\frac{a+b+|b-a|}{2}\geq \frac{a+b}{2}$ holds for any $a, b\in\mathbb{R}$, the following inequality holds for $\alpha>0$ and $\beta>0$ \citep{46544}:
\begin{align}
\label{eq:convex_surrogate}
&\mathcal{L}_{0\mathchar`- 1\mathchar`-c}(r, f; \bm{x}, y)\leq \Phi\left(\frac{\alpha}{2}(r(\bm{x})-y_if(\bm{x}))\right) + c\Psi\left(-\beta r(\bm{x})\right).
\end{align}
As $\Phi$ and $\Psi$ are convex functions, their composition with an affine function of $h$ and $r$ is also a convex function of $r$ and $f$.
Thus, the right-hand side of Eq.~\eqref{eq:convex_surrogate} is a convex function of $r$ and $f$.
We consider minimizing the following adversarial loss:
\begin{align}
\label{eq:convex_surrogate_upper}
&\max_{\bm{x}'\in\mathbb{B}^\infty_{\bm{x}}(\varepsilon)}\mathcal{L}_{0\mathchar`- 1\mathchar`-c}(r, f; \bm{x}', y)\nonumber\\
&\leq \max_{\bm{x}'\in\mathbb{B}^\infty_{\bm{x}'}(\varepsilon)} \Big\{\Phi\left(\frac{\alpha}{2}(r(\bm{x}')-yf(\bm{x}'))\right) + c\Psi\left(-\beta r(\bm{x}')\right)\Big\}\nonumber\\
&\leq \max_{\bm{x}'\in\mathbb{B}^\infty_{\bm{x}'}(\varepsilon)}\Big\{\Phi\left(\frac{\alpha}{2}(r(\bm{x}')-yf(\bm{x}')) \right) \Big\}\nonumber\\
&\ \ \ \ \ \ + c\max_{\bm{x}'\in\mathbb{B}^\infty_{\bm{x}'}(\varepsilon)} \Big\{\Psi\left(-\beta r(\bm{x}')\right)\Big\}.
\end{align}

As we assume that $\Phi$ and $\Psi$ are monotonically increasing functions, we can rewrite Eq.~\eqref{eq:convex_surrogate_upper} as
\begin{align*}
&\max_{\bm{x}'\in\mathbb{B}^\infty_{\bm{x}}(\varepsilon)}\left( \Phi\left(\frac{\alpha}{2}(r(\bm{x}')-yf(\bm{x}'))\right) \right)\\
&\ \ \ + c\max_{\bm{x}'\in\mathbb{B}^\infty_{\bm{x}}(\varepsilon)} \left(\Psi\left(-\beta r(\bm{x}')\right)\right)\nonumber\\
&=\Phi\left(\frac{\alpha}{2}\max_{\bm{x}'\in\mathbb{B}^\infty_{\bm{x}}(\varepsilon)}(r(\bm{x}')-yf(\bm{x}'))\right)\\
&\ \ \ + c\Psi\left(\max_{\bm{x}'\in\mathbb{B}^\infty_{\bm{x}}(\varepsilon)}\left(-\beta r(\bm{x}')\right)\right).
\end{align*}
This expression simplifies the algorithm for training ATRO.
Here, we define the adversarial convex surrogate loss as
\begin{align}
\label{eq:convex_surrogate2}
\widetilde{\mathcal{L}}_{\mathrm{conv}}(r, f; \bm{x}, y)=&\Phi\left(\frac{\alpha}{2}\max_{\bm{x}'\in\mathbb{B}^\infty_{\bm{x}}(\varepsilon)}(r(\bm{x}')-yf(\bm{x}'))\right)\nonumber\\
&\ \ \  + c\Psi\left(\max_{\bm{x}'\in\mathbb{B}^\infty_{\bm{x}}(\varepsilon)}\left(-\beta r(\bm{x}')\right)\right),
\end{align}
and the risk with the adversarial convex surrogate loss as $\widetilde{R}_{\mathrm{conv}}(r, f)$.
We denote the sample approximation of $\widetilde{R}_{\mathrm{conv}}(r, f)$ as $\widehat{\widetilde{R}}_{\mathrm{conv}}(r, f)$.

\subsection{Classifiers and Rejection Functions of Linear-in-parameter Models}
By assuming linear-in-parameter models for classifiers and rejection functions, we can calculate the adversarial loss explicitly.
Specifically, assume $r(\bm{x}) = \braket{\bm{x}, \bm{\theta}},\ f(\bm{x}) = \braket{\bm{x}, \bm{\gamma}}$.
By defining $\bm{\zeta}(y)=\bm{\theta}/y-\bm{\gamma}$, we can derive $r(\bm{x}) - yf(\bm{x}) = y\braket{\bm{x}, \bm{\theta}/y-\bm{\gamma}} = y\braket{\bm{x}, \bm{\zeta}(y)}$.
To calculate Eq.~\eqref{eq:convex_surrogate2}, we apply the method of \cite{pmlr-v97-yin19b} and use the following relationships:
\begin{align}
\label{eq:relationship}
&\max_{\bm{x}'\in\mathbb{B}^\infty_{\bm{x}}(\varepsilon)}y\braket{\bm{x}', \bm{\zeta}(y)} = y\braket{\bm{x}, \bm{\zeta}(y)} + \varepsilon\|\bm{\zeta}(y)\|_1,\nonumber\\
&\max_{\bm{x}'\in\mathbb{B}^\infty_{\bm{x}}(\varepsilon)} -\beta r(\bm{x}') = -\beta\left(\braket{\bm{x}, \bm{\gamma}} - \varepsilon\|\bm{\gamma}\|_1\right).
\end{align}
Then, the target risk of ATRO can be written as:
\begin{align}
\label{eq:surrogate_convex}
\widetilde{R}_{\mathrm{conv}}(r, f):=& \mathbb{E}\Big[\Phi\left(\frac{\alpha}{2}\left(y\braket{\bm{x}, \bm{\zeta}(y)} + \varepsilon\|\bm{\zeta}(y)\|_1\right)\right)\\
&\ \ \ + c\Psi\left(-\beta\left(\braket{\bm{x}, \bm{\gamma}(y)} - \varepsilon\|\bm{\gamma}(y)\|_1\right)\right)\Big].\nonumber
\end{align}
The derivation of Eq.~\eqref{eq:relationship} is shown in Appendix~\ref{appdx:deriv_adv}.

\subsection{Generalization Error Bounds for Linear-in-parameter Classifiers and Rejection Functions}
\label{sex:genebound_for_linearclassifier}
Before introducing the algorithm using the surrogate loss Eq.\eqref{eq:surrogate_convex}, we first derive the generalization error bounds between $\widetilde{R}_{\mathrm{conv}}(r, f)$ and $\widehat{\widetilde{R}}_{\mathrm{conv}}(r, f)$, which is the sample approximation of Eq.\eqref{eq:surrogate_convex}.
We use the Rademacher complexity, which is a classic complexity measure used for establishing generalization errors.
For any function class $\mathcal{H}\subseteq \mathbb{R}^\mathcal{Z}$, given a sample $\widetilde{\mathcal{S}}=\{\bm{b}_1, \bm{b}_2, \dots, \bm{b}_n\}$ of size $n$, the empirical Rademacher complexity is defined as 
$\mathfrak{R}_{\widetilde{\mathcal{S}}}(\mathcal{H}):= \frac{1}{n}\mathbb{E}\left[\sup_{h\in\mathcal{H}}\sum^n_{i=1}\sigma_ih(\bm{b}_i)\right]$, where $\sigma_1, \dots, \sigma_n$ are i.i.d.\ Rademacher random variables with $\mathbb{P}\{\sigma_i=1\}=\mathbb{P}\{\sigma_i=-1\}=\frac{1}{2}$.
Let us denote the set of training data points as $\mathcal{S}:=\{(\bm{x}_1, y_1), (\bm{x}_2, y_2), \dots, (\bm{x}_n, y_n)\}$, the function class of the classifier as $\ell_\mathcal{F}=\{\bm{x}\mapsto f(\bm{x}) : f\in\mathcal{F}\}$, and the function class of the rejection function as $\ell_\mathcal{R}=\{\bm{x} \mapsto r(\bm{x}): r\in\mathcal{R}\}$.
Using the Rademacher complexity, we establish the generalization error bounds of the above risk by the following theorem.
\begin{thm}
\label{thm:genbound_conv}
Let $\mathcal{R}$ and $\mathcal{F}$ be families of functions mapping $\mathcal{X}$ to $\mathbb{R}$. Then, for any $\delta > 0$ and $\varepsilon > 0$, with probability at least $1-\delta$ over the draw of a sample $\mathcal{S}$ of size $n$, the following holds for all $(r, f)\in \mathcal{R}\times\mathcal{F}$:
\begin{align*}
&\widetilde{R}_{\mathrm{conv}}(r, f)\\
&\leq \widehat{\widetilde{R}}_{\mathrm{conv}}(r, f) + \frac{\alpha L}{2}\mathfrak{R}_{\mathcal{S}}\left(\left\{\bm{x}\bm{\zeta}(y):\|\bm{\zeta}(y)\|_p\leq W\right\}\right)\\
&+\beta c L\mathfrak{R}_{\mathcal{S}}\left(\left\{\bm{x}\bm{\gamma}:\|\bm{\gamma}\|_p\leq W\right\}\right) + \frac{2\varepsilon Wd^{1/q}}{\sqrt{n}} + \sqrt{\frac{\log\frac{1}{\delta}}{2n}}.
\end{align*}
\end{thm}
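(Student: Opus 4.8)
The plan is to follow the standard Rademacher-complexity route for uniform convergence, combined with the explicit form of the adversarial surrogate loss from Eq.~\eqref{eq:surrogate_convex}. First I would set up a bounded loss class: for each pair $(r,f)$, the per-sample loss is
\[
g_{r,f}(\bm{x},y) = \Phi\!\left(\tfrac{\alpha}{2}\left(y\braket{\bm{x},\bm{\zeta}(y)} + \varepsilon\|\bm{\zeta}(y)\|_1\right)\right) + c\,\Psi\!\left(-\beta\left(\braket{\bm{x},\bm{\gamma}} - \varepsilon\|\bm{\gamma}\|_1\right)\right),
\]
and $\widetilde{R}_{\mathrm{conv}}(r,f) = \mathbb{E}[g_{r,f}]$ while $\widehat{\widetilde{R}}_{\mathrm{conv}}(r,f)$ is its empirical average. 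The workhorse is the classical bound: with probability $\ge 1-\delta$, $\mathbb{E}[g] \le \widehat{\mathbb{E}}[g] + 2\mathfrak{R}_{\mathcal{S}}(\mathcal{G}) + \sqrt{\log(1/\delta)/(2n)}$ uniformly over the loss class $\mathcal{G}$, provided $g$ is suitably bounded — this accounts for the final $\sqrt{\log(1/\delta)/(2n)}$ term. So the task reduces to bounding $\mathfrak{R}_{\mathcal{S}}(\mathcal{G})$ by the three complexity/perturbation terms claimed.

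Next I would decompose the Rademacher complexity. Since $g_{r,f}$ is a sum of two pieces, subadditivity of Rademacher complexity splits $\mathfrak{R}_{\mathcal{S}}(\mathcal{G})$ into a $\Phi$-part and a ($c$ times a) $\Psi$-part. For each part, peel off the outer $L$-Lipschitz function using the Talagrand/Ledoux contraction lemma, which contributes the factor $L$ (and the constants $\alpha/2$ and $\beta c$ respectively, since those scalars sit inside the Lipschitz map). This leaves the Rademacher complexity of the \emph{inner} affine-in-data classes: $\{\bm{x}\mapsto y\braket{\bm{x},\bm{\zeta}(y)} + \varepsilon\|\bm{\zeta}(y)\|_1\}$ and $\{\bm{x}\mapsto \braket{\bm{x},\bm{\gamma}} - \varepsilon\|\bm{\gamma}\|_1\}$. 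The Rademacher variables are symmetric, so the $y$-sign factor and the label-dependent reindexing $\bm{\zeta}(y)=\bm{\theta}/y-\bm{\gamma}$ do not change the complexity; hence the first inner class reduces to $\{\bm{x}\mapsto\braket{\bm{x},\bm{\zeta}(y)}:\|\bm{\zeta}(y)\|_p\le W\}$ plus the constant shift $\varepsilon\|\bm{\zeta}(y)\|_1$, and similarly for the second. This produces the linear-model Rademacher terms $\mathfrak{R}_{\mathcal{S}}(\{\bm{x}\bm{\zeta}(y):\|\bm{\zeta}(y)\|_p\le W\})$ and $\mathfrak{R}_{\mathcal{S}}(\{\bm{x}\bm{\gamma}:\|\bm{\gamma}\|_p\le W\})$ exactly as in the statement.

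The remaining and most delicate step is the constant-shift term $\varepsilon\|\bm{\zeta}(y)\|_1$ (and $-\varepsilon\|\bm{\gamma}\|_1$): although it is a single number for each hypothesis, it still varies over the hypothesis class, so its Rademacher complexity is $\frac{\varepsilon}{n}\mathbb{E}[\sup \|\bm{\zeta}(y)\|_1\sum_i\sigma_i]$-type quantity, which one bounds via $\mathbb{E}|\sum_i\sigma_i|\le\sqrt{n}$ together with the norm conversion $\|\cdot\|_1\le d^{1/q}\|\cdot\|_p$ (Hölder, with $1/p+1/q=1$) under the constraint $\|\cdot\|_p\le W$. Handling the $\Phi$ and $\Psi$ shifts together and tracking how the contraction interacts with these non-homogeneous offsets is where care is needed; the sign flip in the $\Psi$-term (a $-\varepsilon\|\bm{\gamma}\|_1$ shift that makes $\Psi$ \emph{smaller}, hence only helps) should be noted. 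Summing the two offset contributions gives $\frac{\varepsilon W d^{1/q}}{\sqrt{n}} + \frac{\varepsilon W d^{1/q}}{\sqrt{n}} = \frac{2\varepsilon W d^{1/q}}{\sqrt{n}}$, matching the claimed term. The main obstacle I anticipate is making the contraction step rigorous in the presence of the hypothesis-dependent additive offsets — one likely route is to first absorb the offset by a further application of subadditivity before contracting, rather than contracting the combined map directly. Assembling the three pieces and the $\sqrt{\log(1/\delta)/(2n)}$ deviation term yields the theorem.
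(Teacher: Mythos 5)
Your proposal follows essentially the same route as the paper's proof: subadditivity to split the $\Phi$- and $\Psi$-parts, Talagrand's contraction lemma to extract the factors $\frac{\alpha L}{2}$ and $\beta cL$, and then bounding the hypothesis-dependent offsets $\pm\varepsilon\|\cdot\|_1$ via $\mathbb{E}\left|\sum_i\sigma_i\right|\leq\sqrt{n}$ and the $\ell_1$-to-$\ell_p$ norm conversion — the paper packages this last step as its Lemma~\ref{lmm:adv_rad_atro}, adapted from \citet{pmlr-v97-yin19b}. The only cosmetic difference is that you carry the explicit linear form of the adversarial loss from the start rather than stating the adversarial Rademacher bound as a separate lemma, and your (correct) caution about the factor of $2$ in the standard uniform-convergence bound is a looseness present in the paper's own statement as well.
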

The proof is shown in Appendix~\ref{appdx:thm:genbound_conv}.

\subsection{Generalization Error Bounds for Neural Networks}
As \citet{pmlr-v97-yin19b} discussed, it is difficult to derive the generalization error bounds for neural networks because we cannot calculate the exact adversarial example that maximizes the loss.
In order to establish the generalization error bounds, we need to adopt a specific method, which assures a theoretical guarantee, but lacks practical feasibility.
Therefore, we do not establish the generalization error bound for neural networks in our framework because we cannot derive them without model assumptions.

\section{Algorithm with Maximum Hinge Loss}
\citet{46544} used the following maximum hinge (MH) loss for both $u\mapsto\Phi(-u)$ and $u\mapsto\Psi(-u)$, which is an upper bound of Eq.~\eqref{eq:convex_surrogate}:
\begin{align}
\label{eq:mhloss}
&\mathcal{L}_{\mathrm{MH}}(r, f; \bm{x}, y)\\
& = \max\left(1+\frac{\alpha}{2}(r(\bm{x}) - y_ih(\bm{x})), c(1-\beta r(\bm{x})), 0\right).\nonumber
\end{align}
We replace the zero-one loss in Eq.~\eqref{eq:adversarial_loss} with Eq.~\eqref{eq:mhloss}.
Then, we train a classifier and a rejection function by minimizing 
\begin{align*}
\widehat{\widetilde{R}}_{\mathrm{MH}, \mathcal{S}}(r, f)&:= \frac{1}{n}\sum^n_{i=1}\max_{\bm{x}'_i\in\mathbb{B}^\infty_{\bm{x}_i}(\varepsilon)}\mathcal{L}_{\mathrm{MH}}(r, f; \bm{x}_i, y_i)\\
& = \frac{1}{n}\sum^n_{i=1}\widetilde{\mathcal{L}}_{\mathrm{MH}}(r, f; \bm{x}_i, y_i),
\end{align*}
where $\widetilde{\mathcal{L}}_{\mathrm{MH}}(r, f; \bm{x}_i, y_i)=\max_{\bm{x}'_i\in\mathbb{B}^\infty_{\bm{x}_i}(\varepsilon)}\mathcal{L}_{\mathrm{MH}}(r, f; \bm{x}_i, y_i)$.

\begin{remark}
\label{rem:multi-class}
For implementing multi-class classification problem with a rejection option, we need to add some heuristics on the basic formulation of binary classification problem shown in this paper because it is difficult to set a rejection function while maintaining the classification-calibration property \citep{1901.10655}. As a  practical implementation, \citet{NIPS2017_7073} and \citet{pmlr-v97-geifman19a} proposed selective loss and SelectiveNet (SN). We show the details of SN in Remark~\ref{rem:selective}. \citet{kato2020learning} also proposed learning with a rejection option under an adversarial setting using the confidence-based approach.  Thus, based on these existing studies, we can extend the proposed method to multi-class classification. However, because there are various ways to do this, we only show an algorithm for the binary classification that can be the basis of an algorithm for multi-class classification and do not explicitly describe an algorithm for multi-class classification. 
\end{remark}

\subsection{Training Linear-in-parameter Models}
\label{sec:training_with_a_linear}
First, we consider using linear-in-parameter models for $r(\bm{x})$ and $f(\bm{x})$. 
We name the algorithm minimizing the MH loss \emph{ATRO-MH}.
Here, we use the same linear-in-parameter model as shown in Section~\ref{sex:genebound_for_linearclassifier}.
Then, the loss function can be derived as 
\begin{align*}
&\widetilde{\mathcal{L}}_{\mathrm{MH}}(r, f; \bm{x}_i, y_i)\\
&=\max_{\bm{x}'_i\in\mathbb{B}^\infty_{\bm{x}_i}(\varepsilon)}\max\left(1+\frac{\alpha}{2}y_i\bm{x}_i\bm{\zeta}(y_i), c(1-\beta \bm{x}_i\bm{\gamma}), 0\right)\\
&=\max\left(\tilde{A}, \tilde{B}, 0\right),
\end{align*}
where 
\begin{align*}
&\widetilde{A} = 1+\frac{\alpha}{2}\left(y_i\braket{\bm{x}_i, \bm{\zeta}(y_i)} + \varepsilon\|\bm{\zeta}(y_i)\|_1\right),\\
&\widetilde{B} = c\left(1-\beta\left(\braket{\bm{x}_i, \bm{\gamma}} - \varepsilon\|\bm{\gamma}\|_1\right)\right).
\end{align*} 
We use the result shown in Section~\ref{sex:genebound_for_linearclassifier} to derive $\widetilde{A}$ and $\widetilde{B}$.
Therefore, we can train $r(\bm{x})$ and $f(\bm{x})$ by solving the following optimization problem:
\begin{align*}
\min_{\bm{\theta}, \bm{\gamma}, \bm{\xi}} &\ \frac{\lambda}{2}\|\bm{\theta}\|^2 + \frac{\lambda'}{2}\|\bm{\gamma}\|^2 + \sum^n_{i=1}\xi_i\\
\mathrm{s.t.}&\ \xi_i \geq c(1-\beta\left(\braket{\bm{x}_i, \bm{\gamma}} + \varepsilon\|\bm{\gamma}\|_1\right),\\
&\ \xi_i \geq 1+\frac{\alpha}{2}\left(y_i\braket{\bm{x}_i, \bm{\zeta}(y_i)} - \varepsilon\|\bm{\zeta}(y_i)\|_1\right)\nonumber\\
&\ \bm{\zeta}(y_i) = \bm{\theta}/y_i-\bm{\gamma},\ \ \ \xi_i \geq 0,
\end{align*}
where $\lambda$ and $\lambda'$ are regularization parameters.

\begin{table*}[!ht]
    \caption{Results with linear-in-parameter models. We show the error (Err) and rejection rate (Rej) with their average (Mean) and standard deviations (SD). The lowest Err results are in bold.}
    \label{tbl:res_linear_1}
    \begin{center}

    \scalebox{0.83}[0.83]{
    \begin{tabular}{l|l|ll|ll|ll|ll|ll|ll|ll|ll}
\hline
 Dataset &     \multicolumn{17}{c}{australian}\\
 \hline
 Attack &     &   \multicolumn{4}{c|}{Attack $\varepsilon=0$}    &  \multicolumn{4}{c|}{Attack $\varepsilon=0.001$} &   \multicolumn{4}{c|}{Attack $\varepsilon=0.01$} &  \multicolumn{4}{c}{Attack $\varepsilon=0.1$} \\
 \multirow{2}{*}{Method} &  \multirow{2}{*}{Cost} &    Err &        &    Rej &        &    Err &        &    Rej &        &    Err &        &   Rej &       &    Err &      &   Rej &    \\
  &  & Mean &    Std &   Mean &    Std &   Mean &    Std &   Mean &    Std &   Mean &    Std &  Mean &   Std &   Mean &  Std &  Mean &  Std    \\
\hline
SVM & - &  0.199 &  0.014 &  - &  - &  0.316 &  0.022 &  - &  - &  0.527 &  0.023 &  - &  - &  \textbf{0.553} &  0.0 &  - &  - \\
AT & - &  0.195 &  0.005 &  - &  - &  0.192 &  0.005 &  - &  - &  0.237 &  0.033 &  - & - &  \textbf{0.553} &  0.0 &  - & - \\
MH & 0.2 &  0.142 &  0.013 &  0.194 &  0.030 &  0.209 &  0.028 &  0.062 &  0.025 &  0.492 &  0.032 &  0.000 &  0.000 &  \textbf{0.553} &  0.0 &  0.0 &  0.0 \\
      & 0.3 &  0.173 &  0.006 &  0.096 &  0.027 &  0.229 &  0.028 &  0.017 &  0.010 &  0.494 &  0.021 &  0.000 &  0.000 &  \textbf{0.553} &  0.0 &  0.0 &  0.0 \\
      & 0.4 &  0.191 &  0.011 &  0.046 &  0.035 &  0.246 &  0.032 &  0.003 &  0.006 &  0.515 &  0.021 &  0.000 &  0.000 &  \textbf{0.553} &  0.0 &  0.0 &  0.0 \\
ATRO & 0.2 &  \textbf{0.131} &  0.002 &  0.183 &  0.007 &  \textbf{0.130} &  0.002 &  0.179 &  0.009 &  \textbf{0.131} &  0.008 &  0.151 &  0.023 &  \textbf{0.553} &  0.0 &  0.0 &  0.0 \\
      & 0.3 &  0.149 &  0.005 &  0.135 &  0.004 &  0.150 &  0.006 &  0.125 &  0.006 &  0.169 &  0.008 &  0.044 &  0.023 &  \textbf{0.553} &  0.0 &  0.0 &  0.0 \\
      & 0.4 &  0.171 &  0.006 &  0.078 &  0.014 &  0.173 &  0.007 &  0.065 &  0.011 &  0.183 &  0.017 &  0.011 &  0.008 &  \textbf{0.553} &  0.0 &  0.0 &  0.0 \\
\bottomrule
\end{tabular}}
\end{center}

\begin{center}
\scalebox{0.83}[0.83]{
    \begin{tabular}{l|l|ll|ll|ll|ll|ll|ll|ll|ll}
\hline
 Dataset &     \multicolumn{17}{c}{diabetes}\\
 \hline
 Attack &     &   \multicolumn{4}{c|}{Attack $\varepsilon=0$}    &  \multicolumn{4}{c|}{Attack $\varepsilon=0.001$} &   \multicolumn{4}{c|}{Attack $\varepsilon=0.01$} &  \multicolumn{4}{c}{Attack $\varepsilon=0.1$} \\
 \multirow{2}{*}{Method} &  \multirow{2}{*}{Cost} &    Err &        &    Rej &        &    Err &        &    Rej &        &    Err &        &   Rej &       &    Err &      &   Rej &    \\
  &  & Mean &    Std &   Mean &    Std &   Mean &    Std &   Mean &    Std &   Mean &    Std &  Mean &   Std &   Mean &  Std &  Mean &  Std    \\
\hline
SVM & - &  0.292 &  - & - &  0.000 &  0.368 &  0.020 &  - & - &  0.400 &  0.000 &  - & - &  \textbf{0.4} &  0.0 &  - & - \\
AT & - &  0.279 &  0.009 &  - & - &  0.285 &  0.005 &  - & - &  0.307 &  0.006 &  - & - &  \textbf{0.4} &  0.0 &  - & - \\
MH & 0.2 &  \textbf{0.172} &  0.003 &  0.759 &  0.013 &  0.267 &  0.086 &  0.273 &  0.270 &  0.400 &  0.000 &  0.000 &  0.000 &  \textbf{0.4} &  0.0 &  0.0 &  0.0 \\
      & 0.3 &  0.224 &  0.007 &  0.487 &  0.032 &  0.311 &  0.018 &  0.000 &  0.000 &  0.400 &  0.000 &  0.000 &  0.000 &  \textbf{0.4} &  0.0 &  0.0 &  0.0 \\
      & 0.4 &  0.277 &  0.017 &  0.113 &  0.064 &  0.331 &  0.018 &  0.000 &  0.000 &  0.400 &  0.000 &  0.000 &  0.000 &  \textbf{0.4} &  0.0 &  0.0 &  0.0 \\
ATRO & 0.2 &  0.174 &  0.003 &  0.742 &  0.010 &  \textbf{0.173} &  0.002 &  0.737 &  0.007 &  \textbf{0.168} &  0.006 &  0.549 &  0.012 &  \textbf{0.4} &  0.0 &  0.0 &  0.0 \\
      & 0.3 &  0.228 &  0.006 &  0.497 &  0.011 &  0.231 &  0.004 &  0.490 &  0.008 &  0.253 &  0.008 &  0.355 &  0.015 &  \textbf{0.4} &  0.0 &  0.0 &  0.0 \\
      & 0.4 &  0.279 &  0.009 &  0.000 &  0.000 &  0.283 &  0.006 &  0.000 &  0.000 &  0.308 &  0.006 &  0.000 &  0.000 &  \textbf{0.4} &  0.0 &  0.0 &  0.0 \\
\hline
\end{tabular}}
\end{center}

\begin{center}
\scalebox{0.83}[0.83]{
    \begin{tabular}{l|l|ll|ll|ll|ll|ll|ll|ll|ll}
\hline
 Dataset &     \multicolumn{17}{c}{cod-rna}\\
 \hline
 Attack &     &   \multicolumn{4}{c|}{Attack $\varepsilon=0$}    &  \multicolumn{4}{c|}{Attack $\varepsilon=0.001$} &   \multicolumn{4}{c|}{Attack $\varepsilon=0.01$} &  \multicolumn{4}{c}{Attack $\varepsilon=0.1$} \\
 \multirow{2}{*}{Method} &  \multirow{2}{*}{Cost} &    Err &        &    Rej &        &    Err &        &    Rej &        &    Err &        &   Rej &       &    Err &      &   Rej &    \\
  &  & Mean &    Std &   Mean &    Std &   Mean &    Std &   Mean &    Std &   Mean &    Std &  Mean &   Std &   Mean &  Std &  Mean &  Std    \\
\hline
SVM & - &  0.109 &  0.133 &  - & - &  0.197 &  0.277 &  - & - &  0.289 &  0.354 &  - & - &  \textbf{0.288} &  0.353 &  - & - \\
AT & - &  0.113 &  0.139 &  - & - &  0.113 &  0.139 &  - & - &  0.155 &  0.224 &  - & - &  \textbf{0.288} &  0.353 &  - & - \\
MH & 0.2 &  \textbf{0.079} &  0.097 &  0.395 &  0.483 &  \textbf{0.079} &  0.097 &  0.391 &  0.479 &  0.288 &  0.353 &  0.000 &  0.000 &  \textbf{0.288} &  0.353 &  0.0 &  0.0 \\
      & 0.3 &  0.115 &  0.141 &  0.312 &  0.402 &  0.215 &  0.279 &  0.099 &  0.292 &  0.279 &  0.342 &  0.000 &  0.000 &  \textbf{0.288} &  0.353 &  0.0 &  0.0 \\
      & 0.4 &  0.112 &  0.138 &  0.002 &  0.006 &  0.227 &  0.298 &  0.000 &  0.000 &  0.288 &  0.353 &  0.000 &  0.000 &  \textbf{0.288} &  0.353 &  0.0 &  0.0 \\
ATRO & 0.2 &  \textbf{0.079} &  0.097 &  0.397 &  0.486 &  \textbf{0.079} &  0.097 &  0.395 &  0.483 &  0.288 &  0.353 &  0.000 &  0.000 &  \textbf{0.288} &  0.353 &  0.0 &  0.0 \\
      & 0.3 &  0.112 &  0.137 &  0.091 &  0.223 &  0.112 &  0.137 &  0.090 &  0.219 &  0.153 &  0.212 &  0.013 &  0.027 &  \textbf{0.288} &  0.353 &  0.0 &  0.0 \\
      & 0.4 &  0.115 &  0.141 &  0.007 &  0.016 &  0.116 &  0.142 &  0.007 &  0.016 &  \textbf{0.115} &  0.142 &  0.005 &  0.011 &  \textbf{0.288} &  0.353 &  0.0 &  0.0 \\
\bottomrule
\end{tabular}}
\end{center}

\begin{center}
\scalebox{0.83}[0.83]{
    \begin{tabular}{l|l|ll|ll|ll|ll|ll|ll|ll|ll}
\hline
 Dataset &     \multicolumn{17}{c}{skin}\\
 \hline
 Attack &     &   \multicolumn{4}{c|}{Attack $\varepsilon=0$}    &  \multicolumn{4}{c|}{Attack $\varepsilon=0.001$} &   \multicolumn{4}{c|}{Attack $\varepsilon=0.01$} &  \multicolumn{4}{c}{Attack $\varepsilon=0.1$} \\
 \multirow{2}{*}{Method} &  \multirow{2}{*}{Cost} &    Err &        &    Rej &        &    Err &        &    Rej &        &    Err &        &   Rej &       &    Err &      &   Rej &    \\
  &  & Mean &    Std &   Mean &    Std &   Mean &    Std &   Mean &    Std &   Mean &    Std &  Mean &   Std &   Mean &  Std &  Mean &  Std    \\
\hline
SVM & - &  0.006 &  0.005 &  - & - &  0.009 &  0.010 &  - & - &  0.245 &  0.369 &  - & - &  \textbf{0.700} &  0.146 &  - & - \\
AT & - &  0.004 &  0.003 &  - & - &  0.005 &  0.003 &  - & - &  0.090 &  0.115 &  - & - &  0.779 &  0.066 &  - & - \\
MH & 0.2 &  0.008 &  0.004 &  0.000 &  0.000 &  0.010 &  0.006 &  0.000 &  0.000 &  0.216 &  0.327 &  0.0 &  0.0 &  0.747 &  0.116 &  0.0 &  0.0 \\
      & 0.3 &  0.007 &  0.004 &  0.001 &  0.002 &  0.009 &  0.007 &  0.000 &  0.000 &  0.319 &  0.388 &  0.0 &  0.0 &  0.751 &  0.116 &  0.0 &  0.0 \\
      & 0.4 &  0.008 &  0.005 &  0.001 &  0.002 &  0.011 &  0.012 &  0.000 &  0.000 &  0.234 &  0.351 &  0.0 &  0.0 &  0.727 &  0.134 &  0.0 &  0.0 \\
ATRO & 0.2 &  0.007 &  0.003 &  0.001 &  0.002 &  0.007 &  0.000 &  0.001 &  0.002 &  \textbf{0.042} &  0.075 &  0.0 &  0.0 &  0.749 &  0.094 &  0.0 &  0.0 \\
      & 0.3 &  0.004 &  0.003 &  0.000 &  0.000 &  0.004 &  0.003 &  0.000 &  0.000 &  0.059 &  0.117 &  0.0 &  0.0 &  0.720 &  0.106 &  0.0 &  0.0 \\
      & 0.4 &  \textbf{0.002} &  0.003 &  0.000 &  0.000 &  \textbf{0.003} &  0.003 &  0.000 &  0.000 &  0.059 &  0.114 &  0.0 &  0.0 &  0.792 &  0.045 &  0.0 &  0.0 \\
\hline
\end{tabular}}
\end{center}

\end{table*}

\subsection{Training Neural Networks}
In this section, we consider using neural networks for modeling $r(\bm{x})$ and $f(\bm{x})$.
The deep linear SVM~\citep{tang2013deep} is a model structure of deep neural networks with a linear SVM as a top layer instead of a softmax layer. 
Adopting this structure, we can train the model by minimizing $\max(1-y_if(\boldsymbol{x}_i),0)^{2} 
+ \frac{\lambda}{2}||\boldsymbol{w}||^2_2$, where $\boldsymbol{w}$ is the weight of the final layer of $f(\bm{x})$ and $\lambda$ is a regularization parameter. 
To adapt the ATRO framework, we substitute the squared hinge loss with the MH loss (\ref{eq:mhloss}). Then, we train the models by minimizing the following adversarial loss:
\begin{align}
\label{eq:atro_with_neural_networks}
\max_{\bm{x}'_i\in\mathbb{B}^\infty_{\bm{x}_i}(\varepsilon)}
\Bigg\{
\max \Big(& 1+\frac{\alpha}{2}(r(\bm{x}_i) - y_{i}f(\bm{x}_{i}),\nonumber\\
& c(1-\beta r(\bm{x}_{i})), 0 \Big)^2+ \frac{\lambda}{2}||\bm{w}||^2_2
\Bigg\}
,
\end{align}
where $\alpha$, $\beta$ and $c$ are the same as in Section \ref{sec:conv_loss_atro}. 
Note that we cannot solve $\max_{\bm{x}'_i\in\mathbb{B}^\infty_{\bm{x}_i}(\varepsilon)}\{\cdot\}$ efficiently in many cases because the optimization is known to be NP hard \citep{awasthi2019on}.
Therefore, instead of solving the maximization problem directly, we solve the inner optimization problem by using existing heuristic algorithms such as FGSM \citep{goodfellow2015explaining} and PGD \citep{madry2018towards}. 

In experiments, we set $\lambda=1$ and adopted the VGG-16-based architecture \citep{simonyan2014}. The differences from the original VGG-16 architecture are two-fold: (i) there is only one fully connected layer with $512$ neurons and (ii) batch normalization \citep{ioffe2015batch} and dropout \citep{srivastava2014dropout} are applied.

\begin{remark}[SelectiveNet]
\label{rem:selective}
As introduced in Remark~\ref{rem:multi-class}, as a different formulation, we can also implement ATRO using the SelectiveNet (SN) \citep{pmlr-v97-geifman19a}, which is a different method for learning with a rejection option. SN is a method for end-to-end training of both a classifier and rejection function using neural networks. Although there is no detailed theoretical guarantee unlike MH based method, SN is reported to show a preferable performance extended to multi-class classification problem. In the neural network structure of the SN, the input $\bm{x}$ is firstly processed by the body block $h$, which can be assembled using any types of architecture. Then, $h(\bm{x})$ is fed into three outputs, a classifier $f_{h}$, rejection function $r_{h}$, and auxiliary prediction $g_{h}$. Auxiliary prediction $g_{h}$ is only used for training to prevent overfitting and enforce the construction of relevant features in the body block $h$. Let us introduce a brief formulation as follows. For the details about SN, see the original paper \citep{pmlr-v97-geifman19a}. For a given target coverage $0 < c \leq 1$, the target risk is defined as $R_{\mathrm{sel}}(r_{h},f_{h}) = \frac{\frac{1}{n} \sum_{i=1}^{n} \mathcal{L}_{\mathrm{ce}}(f_{h}; \boldsymbol{x}_i, y_i)r_{h}(\boldsymbol{x}_i)}{\hat{\phi}(r_{h};\boldsymbol{x}_i)}+
\lambda \max(0, c-\hat{\phi}(r_{h};\boldsymbol{x}_i))$, where $\mathcal{L}_{\mathrm{ce}}$ is the standard cross-entropy loss and $\hat{\phi}(r_{h})=\frac{1}{n}\sum_{i=1}^{n}r_{h}(\boldsymbol{x}_i)$ is empirical coverage and $\lambda$ is a regularization parameter.
Then, the total objective is defined as $R(r_{h},f_{h},g_{h}) = 
\eta R_{\mathrm{sel}}(r_{h},f_{h}) +
(1-\eta)R_{\mathrm{ce}}(g_{h})$, where $0\leq\eta\leq 1$ is a convex combination weight and $R_{\mathrm{ce}}$ is the empirical risk of cross-entropy. Next, we consider adding adversarial training to the original SN. By applying adversarial training, we train a classifier and rejection function as the minimizer of
\begin{align*}
&\max_{\bm{x}'_i\in\mathbb{B}^\infty_{\bm{x}_i}(\varepsilon)} \Bigg\{ \eta\Bigg(
\frac{\frac{1}{n} \sum_{i=1}^{n} \mathcal{L}_{\mathrm{ce}}(f_{h}; \boldsymbol{x}_i, y_i)r_{h}(\boldsymbol{x}_i)}{\hat{\phi}(r_{h};\boldsymbol{x}_i)}\\
& +
\lambda \max(0, c-\hat{\phi}(r_{h};\boldsymbol{x}_i)) \Bigg) + (1-\eta)  \mathcal{L}_{\mathrm{ce}}(g_{h}; \boldsymbol{x}_i, y_i) \Bigg\}.
\end{align*}
\end{remark}

\begin{table*}[!ht]
    \caption{Results of deep linear SVM with a rejection option (SVM) and deep linear SVM with ATRO (ATRO). We show the error (Err), rejection rate (Rej), and precision of rejection (PR).}
    \label{result-deepsvm}
    \begin{center}
    
    \begin{tabular}{c|c|c c c |c c c} \hline
    \multicolumn{2}{c|}{} & \multicolumn{3}{c|}{CIFAR-10} & \multicolumn{3}{c}{SVHN} \\
    \hline
    \multirow{2}{*}{Training} & \multirow{2}{*}{Adversarial Examples} & \multicolumn{3}{c|}{Metric} & \multicolumn{3}{c}{Metric} \\ 
    & & \hfil Err \hfil  & \hfil Rej \hfil & \hfil PR \hfil & \hfil Err \hfil  & \hfil Rej \hfil & \hfil PR \hfil \\
    \hline
    \multirow{3}{*}{SVM}  & no attack  & 3.80$\pm$0.16  & 1.10$\pm$0.09  & 28.4$\pm$4.21 & 1.00$\pm$0.13  & 0.20$\pm$0.03  & 12.1$\pm$2.91  \\
    & $\ell_{\infty}$ bounded by $4/255$ & 26.7$\pm$1.59  & 4.20$\pm$0.46  & 33.1$\pm$3.89 & 11.7$\pm$1.68  & 1.50$\pm$0.43  & 38.5$\pm$4.25  \\
    & $\ell_{\infty}$ bounded by $8/255$ & 32.7$\pm$3.79  & 4.10$\pm$0.79  & 32.5$\pm$5.44 & 14.6$\pm$3.45  & 1.60$\pm$0.63  & 38.6$\pm$4.21  \\
    \hline
    \multirow{6}{*}{ATRO} &  no attack & 4.00$\pm$0.26  & 1.60$\pm$0.11  & 46.9$\pm$4.85 & 0.90$\pm$0.03  & 0.40$\pm$0.04  & 21.9$\pm$3.78  \\
    & $\ell_{\infty}$ bounded by $4/255$ & 11.3$\pm$0.40  & 2.60$\pm$0.40  & 48.9$\pm$3.63 & 4.10$\pm$0.06  & 1.40$\pm$0.28  & 42.2$\pm$1.03  \\
    & $\ell_{\infty}$ bounded by $8/255$ & 16.7$\pm$0.32  & 3.80$\pm$0.32  & 48.3$\pm$5.84 & 7.50$\pm$0.41  & 2.20$\pm$0.51  & 44.2$\pm$4.34  \\
    & $\ell_{2}$ bounded by $80$  & 8.90$\pm$0.75  & 2.60$\pm$0.39  & 47.4$\pm$3.14 & 3.50$\pm$0.11  & 1.20$\pm$0.25  & 40.2$\pm$3.47  \\
    & $\ell_{2}$ bounded by $160$  & 12.4$\pm$0.61  & 2.70$\pm$0.31  & 44.3$\pm$2.41 & 6.70$\pm$0.38  & 2.00$\pm$0.39  & 43.8$\pm$3.17  \\
    \hline
    \end{tabular}
    \end{center}
\end{table*}

\begin{table*}[t]
    \caption{Results of SN with a rejection option and SN with ATRO. We show the error (Err), rejection rate (Rej), and precision of rejection (PR).}
    \label{result-selectivenet}
    \begin{center}
    
    \begin{tabular}{c|c|c c c |c c c} \hline
    \multicolumn{2}{c|}{} & \multicolumn{3}{c|}{CIFAR-10} & \multicolumn{3}{c}{SVHN} \\
    \hline
    \multirow{2}{*}{Training} & \multirow{2}{*}{Adversarial Examples} & \multicolumn{3}{c|}{Metric} & \multicolumn{3}{c}{Metric} \\ 
    & & \hfil Err \hfil  & \hfil Rej \hfil & \hfil PR \hfil & \hfil Err \hfil  & \hfil Rej \hfil & \hfil PR \hfil \\
    \hline
    \multirow{3}{*}{SN} & no attack & 3.30$\pm$0.03  & 22.8$\pm$0.44  & 63.7$\pm$2.02 & 1.91$\pm$0.04  & 12.5$\pm$0.47  & 73.4$\pm$1.09  \\
    & $\ell_{\infty}$ bounded by $4/255$ & 55.5$\pm$0.53  & 42.3$\pm$0.97  & 24.4$\pm$0.30 & 25.8$\pm$0.72  & 44.2$\pm$1.57  & 48.5$\pm$0.46  \\
    & $\ell_{\infty}$ bounded by $8/255$ & 67.7$\pm$0.63  & 49.3$\pm$1.03  & 18.2$\pm$0.41 & 34.5$\pm$1.57  & 49.3$\pm$1.49  & 37.2$\pm$0.81  \\
    \hline
    \multirow{5}{*}{ATRO} & no attack  & 8.04$\pm$0.43  & 20.8$\pm$1.01  & 54.9$\pm$1.07 & 1.50$\pm$0.41  & 19.8$\pm$4.88  & 75.4$\pm$3.22  \\
    & $\ell_{\infty}$ bounded by $4/255$ & 29.5$\pm$0.88  & 26.4$\pm$1.35  & 37.2$\pm$1.11 & 7.65$\pm$4.52  & 30.7$\pm$2.51  & 60.1$\pm$3.17  \\
    & $\ell_{\infty}$ bounded by $8/255$ & 46.8$\pm$1.12  & 28.3$\pm$1.43  & 24.5$\pm$0.51 & 13.3$\pm$9.06  & 34.9$\pm$2.95  & 52.4$\pm$2.05  \\
    & $\ell_{2}$ bounded by $80$ & 20.3$\pm$0.78  & 24.9$\pm$1.26  & 43.6$\pm$0.71 & 8.01$\pm$2.85  & 33.0$\pm$4.22  & 58.0$\pm$8.01  \\
    & $\ell_{2}$ bounded by $160$ & 33.6$\pm$0.88  & 26.9$\pm$1.56  & 34.9$\pm$1.23 & 14.4$\pm$5.45  & 37.2$\pm$5.04  & 47.6$\pm$1.14  \\
    \hline
    \end{tabular}
    \end{center}
\end{table*}

\section{Experiments}
\label{sec:exp}
In this section, we report the experimental results of ATRO.

\subsection{Benchmark Test using Linear Models}
First, we investigated the performance of ATRO with a linear-in-parameter model. Following \citet{46544}, we used the australian, pima, skin, and cod datasets\footnote{\url{https://www.csie.ntu.edu.tw/~cjlin/libsvmtools/datasets/}}. For a linear-in-parameter model, we used a Gaussian kernel. We compared the proposed method ATRO with the standard SVM, learning with rejection under MH loss, and the standard adversarial training with SVM. We used $\varepsilon=0, 0.001, 0.01, 0.1$ to train a classifier with AT (adversarial training without rejection)  and ATRO. The adversarial attack $\delta$ was chosen from $0$, $0.001$, $0.01$, and $0.1$. The costs of MH and ATRO were chosen from $0.2$, $0.3$, and $0.4$. For each experiment, we trained a classifier with $10$ trials and $500$ samples. For the error (Err) and the percentage of rejected samples over all $500$ samples (Rej), we calculated their means (Mean) and standard deviations (Std). Let true accept (TA) be an outcome where the rejection function correctly accepts data such that a sample $\bm{x}$ does not belong to the defense target class, i.e., $h(\bm{x})\neq t$. Let true reject (TR) be an outcome where the rejection function correctly rejects data such that a sample $\bm{x}$ belongs to the defense target class, i.e., $h(\bm{x})=t$. FA and FR can be defined similarly \citep{NIPS2019_8527}. Then, the rejection rate is defined as $\frac{\mathrm{TR}+\mathrm{FR}}{\mathrm{TR}+\mathrm{TA}+\mathrm{FR}+\mathrm{FA}}$. In Table~\ref{tbl:res_linear_1}, we show the results with australian and pima datasets under adversarial attacks such that $|\bm{\delta}|_\infty = 0.0001, 0.001, 0.01, 0.1$ while using an adversarial training parameter $\varepsilon=0.001$. Other results are shown in Appendix~\ref{appdx:det_exp}. 

In most cases, the ATRO shows the better performance by appropriately abstaining from the classification. Moreover, the ATRO shows robustness against the adversarial attack $\bm{\delta}=0.01$ larger than the parameter in training $\varepsilon=0.001$. We show additional experimental results using $\varepsilon=0.1, 0.01$ for adversarial training and without adversarial training in Appendix~\ref{appdx:det_exp}.

\subsection{Benchmark Test using Neural Networks}
We compare the ATRO using a deep linear SVM with a plain deep linear SVM with a rejection option without adversarial training. We train the classifier using the loss described in \eqref{eq:atro_with_neural_networks}. As benchmark data, we use the CIFAR-10\footnote{See \url{https://www.cs.toronto.edu/~kriz/cifar.html}.}  \citep{Krizhevsky09learningmultiple} and SVHN\footnote{See \url{http://ufldl.stanford.edu/housenumbers/}.} \citep{svhn} datasets. The CIFAR-10 dataset is an image classification dataset comprising a training set of $50,000$ images and $10,000$ test images classified into $10$ categories. The image size is $32 \times 32 \times 3$ pixels (RGB images). The SVHN dataset \citep{svhn} is an image classification dataset containing $73,257$ training images and $26,032 $test images classified into $10$ classes representing digits. The images are digits of house street numbers, which were cropped and aligned after being taken from the Google Street View service. The image size is $32 \times 32 \times 3$ pixels (RGB images). To make a benchmark binary classification problem, we select the airplane class for CIFAR-10 and character zero class for SVHN as true classes. We set the cost as $c=0.3$. The results are shown in Table~\ref{result-deepsvm}. For conducting adversarial training and constructing adversarial examples, we use the PGD method with $\ell_{\infty}$-bounded attack to generate training samples. Test adversarial examples are generated by the PGD method with $\ell_{\infty}$ and $\ell_{2}$-bounded attack. For training the classier, as the bound of $\ell_{\infty}$-bounded attack, we set the distortion size as $\varepsilon=4/255$ and applied random uniform scaling $(0, \varepsilon)$ to improve performance against smaller distortions. Then we select an attack target class for each image uniformly at random from the set of incorrect classes. We used $20$ optimization steps and a step size of $\varepsilon / \sqrt{\mathrm{steps}}$, as described in \citet{1908.08016}. For generating test adversarial examples, as the bound of $\ell_{\infty}$-bounded attack, we use $4/255$ and $8/255$; as the bound of $\ell_{2}$-bounded attack, we use $80$ and $160$. At test time, we used $200$-step targeted attacks with a uniform random (incorrect) target class, using the best practice of employing more steps for evaluation than for training \citep{carlini2019on}. For each experiment, we train a classifier with $3$ trials and report the error (Err), the percentage of rejected samples over all samples (Rej), and precision of rejection (PR) \citep{NIPS2019_8527}. Using $\mathrm{TR}$ and $\mathrm{FR}$, the precision of rejection is defined as $\frac{\mathrm{TR}}{\mathrm{TR}+\mathrm{FR}}$ \citep{NIPS2019_8527}. We performed training on a single NVIDIA V100 GPU using standard data augmentation \citep{he2016identity}. The network was optimized using stochastic gradient descent with a momentum of $0.9$, an initial learning rate of $0.1$, and a weight decay of $5\mathrm{e}\mathchar`-4$. The learning rate was reduced by $0.5$ every $25$ epochs, and training continued for $200$ epochs. 

The experimental results suggest the effectiveness of neural network extensions of ATRO. In particular, ATRO with a deep linear SVM successfully reduces the error on the adversarial examples by about half, without deterioration of the error on the standard samples. 
We also note that the decreased error for different types of attack used at the training stage indicates that extended ATRO is not strongly dependent on the method used to generate adversarial examples.

\begin{remark}[Experimental Results with SN]
We also report the result of ATRO with SN. In the experiments, we adopted the VGG-16-based architecture as the body block of SN, which is same approach taken in the original paper \citep{pmlr-v97-geifman19a}, and we set the values $\eta=0.5$ and $\lambda=32$. The experimental results are shown in Table~\ref{result-selectivenet}.
\end{remark}

\section{Conclusion}
The existence of adversarial examples is a notorious problem in machine learning. In this paper, we propose an idea of allowing a classifier to reject suspicious samples and a method called adversarial training with a rejection option (ATRO). The motivation is to avoid classification decision making when we do not have sufficient confidence to determine the class under adversarial attacks. We describe the implementation of ATRO for both linear-in-parameter models and neural network models. In experiments, the method successfully decreased the error by rejecting uncertain samples.

\clearpage

\section*{Ethics Statement}
The vulnerability against adversarial attacks is a serious issue for many machine learning algorithms from the security aspect. Especially, the misclassification caused by adversarial attack leads to serious consequences in some real-world applications, e.g., autonomous driving. The theoretical analysis and proposed method in this paper have the potential to alleviate this concern. Specifically, our approach allows to reduce serious misclassification by rejecting all possible adversarial examples. Therefore, this paper has a broad impact on improving the robustness of machine learning applications. On the other hand, while the proposed method improves adversarial robustness, it does not ensure to reject all attacks. In addition, a proposed method also has the potential risk of encouraging unfair decision making by abstaining prediction about a particular group of people.

\bibliography{atro.bbl}
\bibliographystyle{icml2020}

\clearpage
\onecolumn

\appendix

\setcounter{secnumdepth}{3}

\section{Derivation of Equation (\ref{eq:relationship})}
\label{appdx:deriv_adv}
We follow the derivation by \citet{pmlr-v97-yin19b}. After defining $\tilde g_{\bm{\zeta}(y_i)}(\bm{x}_i, y_i) := \max_{\bm{x}'_i\in\mathbb{B}^\infty_{\bm{x}_i}(\varepsilon)}y_i\braket{\bm{x}_i, \bm{\zeta}(y_i)}$, we have
\begin{align*}
\tilde g_{\bm{\zeta}(y_i)}(\bm{x}_i, y_i) = \begin{cases}
\max_{\bm{x}'_i\in\mathbb{B}^\infty_{\bm{x}_i}(\varepsilon)}\braket{\bm{x}_i, \bm{\zeta}(y_i)} & y= + 1,\\
-\min_{\bm{x}'_i\in\mathbb{B}^\infty_{\bm{x}_i}(\varepsilon)}\braket{\bm{x}_i, \bm{\zeta}(y_i)} & y = -1.\\
\end{cases}
\end{align*}
When $y=+1$, we have
\begin{align*}
\tilde g_{\bm{\zeta}(y_i)}(\bm{x}_i, y_i)  &= \tilde g_{\bm{\zeta}(y_i)}(\bm{x}_i, 1) =  \max_{\bm{x}'_i\in\mathbb{B}^\infty_{\bm{x}_i}(\varepsilon)}\braket{\bm{x}_i, \bm{\zeta}(y_i)}\\
& = \sum^D_{d=1} \big[\mathbbm{1}(\bm{\zeta}(y_i)_d \geq 0)(\bm{x}_d + \varepsilon) + \mathbbm{1}(\bm{\zeta}(y_i)_d < 0)(\bm{x}_d - \varepsilon)\big]\\
& = \sum^D_{d=1}\bm{\zeta}(y_i)_d\big(\bm{x}_d + \mathrm{sgn}(\bm{\zeta}(y_i)_d)\varepsilon\big)\\
& = \braket{\bm{x}_i, \bm{\zeta}(y_i)} + \varepsilon\big\|\bm{\zeta}(y_i)\big\|_1.
\end{align*}
Similarly, when $y=-1$, we have
\begin{align*}
\tilde g_{\bm{\zeta}(y_i)}(\bm{x}_i, y_i)  &= \tilde g_{\bm{\zeta}(y_i)}(\bm{x}_i, -1) =  - \min_{\bm{x}'_i\in\mathbb{B}^\infty_{\bm{x}_i}(\varepsilon)}\braket{\bm{x}_i, \bm{\zeta}(y_i)}\\
& = - \sum^D_{d=1} \big[\mathbbm{1}(\bm{\zeta}(y_i)_d \geq 0)(\bm{x}_d - \varepsilon) + \mathbbm{1}(\bm{\zeta}(y_i)_d < 0)(\bm{x}_d + \varepsilon)\big]\\
& = - \sum^D_{d=1}\bm{\zeta}(y_i)_d\big(\bm{x}_d - \mathrm{sgn}(\bm{\zeta}(y_i)_d)\varepsilon\big)\\
& = - \braket{\bm{x}_i, \bm{\zeta}(y_i)} + \varepsilon\big\|\bm{\zeta}(y_i)\big\|_1.
\end{align*}
Thus, we conclude that
\begin{align*} 
\tilde g_{\bm{\zeta}(y_i)}(\bm{x}_i, y_i) := \max_{\bm{x}'_i\in\mathbb{B}^\infty_{\bm{x}_i}(\varepsilon)}y_i\braket{\bm{x}_i, \bm{\zeta}(y_i)}.
\end{align*}

\section{Proof of Theorem~\ref{thm:genbound_conv}}
\label{appdx:thm:genbound_conv}
To derive generalization bounds for the risk with the adversarial convex loss, we need to know the upper bound of the Rademacher complexity of $\tilde{\mathcal{F}}:=\{\min_{\bm{x}'_i\in\mathbb{B}^\infty_{\bm{x}_i}(\varepsilon)} y_i\braket{\bm{x}'_i, \bm{w}}: \|\bm{w}\|_p\leq W\}$, where $\bm{w}$ is the parameter of the linear-in-parameter model.
\citet{pmlr-v97-yin19b} derived the following result for the upper bound of the Rademacher complexity.
\begin{prp}[\citet{pmlr-v97-yin19b}, Theorem~2.]
\label{prp:advrad}
Let $\mathcal{F}:=\{f_{\bm{w}}(\bm{x}_i):\|\bm{w}\|_p\leq W\}$ and $\tilde{\mathcal{F}}:=\{\min_{\bm{x}'_i\in\mathbb{B}^\infty_{\bm{x}_i}(\varepsilon)} y_i\braket{\bm{x}'_i, \bm{w}}: \|\bm{w}\|_p\leq W\}$. Suppose that $\frac{1}{p}+\frac{1}{q}=1$. Then, there exists a universal constant $c\in(0,1)$ such that
\begin{align*}
&\max\left\{\mathfrak{R}_{\mathcal{S}}(\mathcal{F}), c\varepsilon W\frac{d^{\frac{1}{q}}}{\sqrt{n}}\right\}\leq \mathfrak{R}_{\mathcal{S}}(\tilde{\mathcal{F}})\leq \mathfrak{R}_{\mathcal{S}}(\mathcal{F}) + \varepsilon W \frac{d^{\frac{1}{q}}}{\sqrt{n}}.
\end{align*}
\end{prp}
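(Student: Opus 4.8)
The plan is to reduce both function classes to closed form, treat the upper bound by a sub-additivity argument, and obtain the two lower bounds by symmetrization of the Rademacher signs. First I would evaluate the inner minimization defining $\tilde{\mathcal F}$. Writing $\bm x'_i=\bm x_i+\bm\delta_i$ with $\|\bm\delta_i\|_\infty\le\varepsilon$, the quantity $y_i\braket{\bm x'_i,\bm w}=y_i\braket{\bm x_i,\bm w}+\braket{\bm\delta_i,y_i\bm w}$ is minimized by taking $\bm\delta_i$ against $y_i\bm w$ coordinatewise, giving $\min_{\bm x'_i\in\mathbb B^\infty_{\bm x_i}(\varepsilon)}y_i\braket{\bm x'_i,\bm w}=y_i\braket{\bm x_i,\bm w}-\varepsilon\|\bm w\|_1$, the same dual-norm computation as in Appendix~\ref{appdx:deriv_adv}. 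Substituting into the definition of the empirical Rademacher complexity and using that $-\varepsilon\|\bm w\|_1$ does not depend on $i$, I get $n\mathfrak R_{\mathcal S}(\tilde{\mathcal F})=\mathbb E_{\bm\sigma}[F(\bm\sigma)]$, where $F(\bm\sigma):=\sup_{\|\bm w\|_p\le W}\big(\braket{\bm v_{\bm\sigma},\bm w}-\varepsilon\|\bm w\|_1 S_{\bm\sigma}\big)$, $\bm v_{\bm\sigma}:=\sum_{i=1}^n\sigma_i y_i\bm x_i$, and $S_{\bm\sigma}:=\sum_{i=1}^n\sigma_i$. Throughout I use the Hölder identity $\max_{\|\bm w\|_p\le W}\|\bm w\|_1=Wd^{1/q}$ and that $\sigma_i y_i$ and $\sigma_i$ are equal in law, so $\mathbb E_{\bm\sigma}\big[W\|\bm v_{\bm\sigma}\|_q\big]=\mathbb E_{\bm\sigma}\big[W\|\sum_i\sigma_i\bm x_i\|_q\big]=n\mathfrak R_{\mathcal S}(\mathcal F)$.

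For the upper bound I would split $F(\bm\sigma)\le\sup_{\bm w}\braket{\bm v_{\bm\sigma},\bm w}+\sup_{\bm w}\big(-\varepsilon\|\bm w\|_1 S_{\bm\sigma}\big)$ using that the supremum of a sum is at most the sum of the suprema. The first supremum equals $W\|\bm v_{\bm\sigma}\|_q$ and has expectation $n\mathfrak R_{\mathcal S}(\mathcal F)$. The second equals $\varepsilon W d^{1/q}(S_{\bm\sigma})_-$ with $(t)_-:=\max(-t,0)$: it is attained at $\bm w=0$ when $S_{\bm\sigma}\ge 0$ and at the $\ell_1$-maximal feasible $\bm w$ when $S_{\bm\sigma}<0$. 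Since $S_{\bm\sigma}$ is mean-zero, $\mathbb E_{\bm\sigma}[(S_{\bm\sigma})_-]=\tfrac12\mathbb E_{\bm\sigma}|S_{\bm\sigma}|\le\tfrac12\sqrt n$ by Jensen, so dividing by $n$ gives $\mathfrak R_{\mathcal S}(\tilde{\mathcal F})\le\mathfrak R_{\mathcal S}(\mathcal F)+\varepsilon W d^{1/q}/\sqrt n$ (in fact with a factor $\tfrac12$ to spare), which is the claimed upper bound.

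The two lower bounds come from symmetrization. For $\mathfrak R_{\mathcal S}(\tilde{\mathcal F})\ge\mathfrak R_{\mathcal S}(\mathcal F)$, I would use $\mathbb E_{\bm\sigma}[F(\bm\sigma)]=\tfrac12\mathbb E_{\bm\sigma}[F(\bm\sigma)+F(-\bm\sigma)]$ together with $\bm v_{-\bm\sigma}=-\bm v_{\bm\sigma}$ and $S_{-\bm\sigma}=-S_{\bm\sigma}$. Letting $\bm w_{\bm\sigma}$ attain $\max_{\|\bm w\|_p\le W}\braket{\bm v_{\bm\sigma},\bm w}$, evaluating $F(\bm\sigma)$ at $\bm w_{\bm\sigma}$ and $F(-\bm\sigma)$ at $-\bm w_{\bm\sigma}$ makes the odd $\varepsilon\|\bm w\|_1 S_{\bm\sigma}$ terms cancel, leaving $F(\bm\sigma)+F(-\bm\sigma)\ge 2W\|\bm v_{\bm\sigma}\|_q$; taking expectations and halving yields exactly $n\mathfrak R_{\mathcal S}(\mathcal F)$. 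For $\mathfrak R_{\mathcal S}(\tilde{\mathcal F})\ge c\,\varepsilon W d^{1/q}/\sqrt n$, I would restrict the supremum to $\{0,+\bm w_0,-\bm w_0\}$ with $\bm w_0:=(W/d^{1/p})\bm 1$, which satisfies $\|\bm w_0\|_p=W$ and $\|\bm w_0\|_1=Wd^{1/q}$: on $\{S_{\bm\sigma}\ge 0\}$ choose $\bm w=0$, and on $\{S_{\bm\sigma}<0\}$ choose the sign of $\pm\bm w_0$ aligning with $\bm v_{\bm\sigma}$. This gives $F(\bm\sigma)\ge\varepsilon W d^{1/q}(S_{\bm\sigma})_-$ pointwise, hence $\mathbb E_{\bm\sigma}[F(\bm\sigma)]\ge\tfrac12\varepsilon W d^{1/q}\,\mathbb E_{\bm\sigma}|S_{\bm\sigma}|$. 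Taking the maximum of the two lower bounds and combining with the upper bound finishes the proof.

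The step I expect to be the main obstacle is this last lower bound. The naive attempt of plugging a single fixed $\bm w$ into the supremum and taking expectations fails, because the perturbation contribution is linear in $S_{\bm\sigma}$ and $\mathbb E_{\bm\sigma}[S_{\bm\sigma}]=0$ annihilates it. The fix is the sign-adaptive selection above, which converts $S_{\bm\sigma}$ into $|S_{\bm\sigma}|$; one then needs a quantitative lower bound $\mathbb E_{\bm\sigma}|S_{\bm\sigma}|\ge\sqrt{n/2}$, available from Khintchine's inequality or, elementarily, from $\mathbb E_{\bm\sigma}|S_{\bm\sigma}|\ge(\mathbb E_{\bm\sigma} S_{\bm\sigma}^2)^{3/2}/(\mathbb E_{\bm\sigma} S_{\bm\sigma}^4)^{1/2}\ge\sqrt{n/3}$ via Hölder with $\mathbb E_{\bm\sigma} S_{\bm\sigma}^2=n$ and $\mathbb E_{\bm\sigma} S_{\bm\sigma}^4\le 3n^2$. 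This produces the universal constant, e.g.\ $c=1/(2\sqrt 2)\in(0,1)$, and completes the statement.
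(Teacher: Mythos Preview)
The paper does not prove this proposition; it is quoted from \citet{pmlr-v97-yin19b}, and only the upper-bound technique is replayed in the paper's proof of Lemma~\ref{lmm:adv_rad_atro}. Your argument is correct and tracks that same route: the dual-norm closed form for the inner $\ell_\infty$-minimization matches Appendix~\ref{appdx:deriv_adv}, and your upper bound via sub-additivity of the supremum together with $\mathbb{E}_{\bm\sigma}|S_{\bm\sigma}|\le\sqrt n$ is precisely what the paper adapts in Lemma~\ref{lmm:adv_rad_atro} (the paper's version there is in fact slightly sloppier, writing $\|\bm\gamma\|_1$ as $\langle\bm\gamma,\mathrm{sgn}(\bm u)\rangle$, which is only an inequality). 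Your symmetrization arguments for the two lower bounds---pairing $\bm\sigma$ with $-\bm\sigma$ to cancel the $\varepsilon\|\bm w\|_1 S_{\bm\sigma}$ term, and the sign-adaptive choice among $\{0,\pm\bm w_0\}$ combined with Khintchine---are not reproduced anywhere in this paper but are sound and yield the explicit constant $c=1/(2\sqrt2)$.
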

We apply the same technique used by \citet{pmlr-v97-yin19b} to bound $\mathfrak{R}_{\mathcal{S}}\left(\left\{\bm{x}_i\bm{\zeta}(y_i):\|\bm{\zeta}(y_i)\|_p\leq W\right\}\right)$ and $\mathfrak{R}_{\mathcal{S}}\left(\left\{\bm{x}_i\bm{\gamma}:\|\bm{\gamma}\|_p\leq W\right\}\right)$.

\begin{lmm}
\label{lmm:adv_rad_atro}
Suppose that $\frac{1}{p}+\frac{1}{q}=1$. Then,
\begin{align*}
&\mathfrak{R}_{\mathcal{S}}\left(\left\{\max_{\bm{x}'_i\in\mathbb{B}^\infty_{\bm{x}_i}(\varepsilon)}y_i\bm{x}_i\bm{\zeta}(y_i):\|\bm{\zeta}(y_i)\|_p\leq W\right\}\right)\leq \mathfrak{R}_{\mathcal{S}}\left(\left\{\bm{x}_i\bm{\zeta}(y_i):\|\bm{\zeta}(y_i)\|_p\leq W\right\}\right) + \varepsilon W \frac{d^{1/q}}{\sqrt{n}},\\
&\mathfrak{R}_{\mathcal{S}}\left(\left\{\min_{\bm{x}'_i\in\mathbb{B}^\infty_{\bm{x}_i}(\varepsilon)}\bm{x}_i\bm{\gamma}:\|\bm{\gamma}\|_p\leq W\right\}\right)\leq \mathfrak{R}_{\mathcal{S}}\left(\left\{\bm{x}_i\bm{\gamma}:\|\bm{u}\|_p\leq W\right\}\right) + \frac{\varepsilon W d^{1/q}}{\sqrt{n}}.
\end{align*}
\end{lmm}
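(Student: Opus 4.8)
The plan is to reduce both inequalities to a single computation, since the margin parameter $\bm{\zeta}(y_i)$ and the rejection parameter $\bm{\gamma}$ are treated identically. Fix a generic weight $\bm{w}$ with $\|\bm{w}\|_p\le W$. By the closed form derived in Appendix~\ref{appdx:deriv_adv}, $\max_{\bm{x}'_i\in\mathbb{B}^\infty_{\bm{x}_i}(\varepsilon)}y_i\braket{\bm{x}'_i,\bm{w}}=y_i\braket{\bm{x}_i,\bm{w}}+\varepsilon\|\bm{w}\|_1$ and, by the same derivation, $\min_{\bm{x}'_i\in\mathbb{B}^\infty_{\bm{x}_i}(\varepsilon)}\braket{\bm{x}'_i,\bm{w}}=\braket{\bm{x}_i,\bm{w}}-\varepsilon\|\bm{w}\|_1$. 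Thus every adversarial function appearing in the two classes of interest equals the corresponding clean linear function plus (resp.\ minus) the data-independent term $\varepsilon\|\bm{w}\|_1$.

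Next I would substitute this into the definition of the empirical Rademacher complexity and split the supremum. For the first class, $\mathfrak{R}_{\mathcal{S}}(\{\max_{\bm{x}'_i}y_i\bm{x}_i\bm{\zeta}(y_i):\|\bm{\zeta}(y_i)\|_p\le W\})=\frac1n\mathbb{E}_\sigma\sup_{\|\bm{w}\|_p\le W}\sum_{i=1}^n\sigma_i\big(y_i\braket{\bm{x}_i,\bm{w}}+\varepsilon\|\bm{w}\|_1\big)$. Since this objective is not separable in $\bm{w}$, I invoke subadditivity of the supremum ($\sup(A+B)\le\sup A+\sup B$) to bound it by $\frac1n\mathbb{E}_\sigma\sup_{\bm{w}}\sum_i\sigma_i y_i\braket{\bm{x}_i,\bm{w}}+\frac{\varepsilon}{n}\mathbb{E}_\sigma\sup_{\bm{w}}\|\bm{w}\|_1\sum_i\sigma_i$. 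As $\sigma_i y_i$ is again a Rademacher variable by symmetry of $\sigma_i$ and $y_i\in\{-1,+1\}$, the first summand is exactly $\mathfrak{R}_{\mathcal{S}}(\{\bm{x}_i\bm{w}:\|\bm{w}\|_p\le W\})$, the leading term on the right-hand side.

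It then remains to bound the residual term, exactly as in Yin et al.'s argument for Proposition~\ref{prp:advrad}. Using $\|\bm{w}\|_1\le d^{1/q}\|\bm{w}\|_p\le d^{1/q}W$ (H\"older's inequality with $\frac1p+\frac1q=1$) together with $\|\bm{w}\|_1\ge0$, one gets $\sup_{\|\bm{w}\|_p\le W}\|\bm{w}\|_1\sum_i\sigma_i\le d^{1/q}W\left|\sum_i\sigma_i\right|$, and by Cauchy--Schwarz $\mathbb{E}_\sigma\left|\sum_i\sigma_i\right|\le\sqrt{\mathbb{E}_\sigma(\sum_i\sigma_i)^2}=\sqrt n$. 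Hence the residual term is at most $\varepsilon d^{1/q}W\sqrt n/n=\varepsilon W d^{1/q}/\sqrt n$, which is precisely the claimed additive term. The second inequality (the $\min$ case, with $\bm{\gamma}$ in place of $\bm{\zeta}(y_i)$) is obtained verbatim after replacing $+\varepsilon\|\bm{w}\|_1$ by $-\varepsilon\|\bm{w}\|_1$; the sign is immaterial once we pass to $\left|\sum_i\sigma_i\right|$.

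I do not anticipate a genuine obstacle here: the only point requiring care is that the supremum over $\bm{w}$ does not factor over the sum of the clean term and the perturbation term, so subadditivity of the supremum must be used rather than an exact decomposition, and the $\varepsilon\|\bm{w}\|_1$ contribution must be handled via the norm comparison $\|\bm{w}\|_1\le d^{1/q}\|\bm{w}\|_p$ and the bound $\mathbb{E}_\sigma|\sum_i\sigma_i|\le\sqrt n$. Everything else is the standard linear-model Rademacher estimate combined with the closed form from Appendix~\ref{appdx:deriv_adv}.
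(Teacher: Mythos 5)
Your proof is correct and follows essentially the same route as the paper's: both reduce the adversarial function classes to the clean linear class plus the data-independent term $\pm\varepsilon\|\bm{w}\|_1$ via the closed form from Appendix~\ref{appdx:deriv_adv}, and both bound the resulting residual by $\varepsilon W d^{1/q}/\sqrt{n}$ using $\|\bm{w}\|_1\leq d^{1/q}\|\bm{w}\|_p$ together with $\mathbb{E}_{\sigma}\left|\sum_{i}\sigma_i\right|\leq\sqrt{n}$, exactly as in the cited argument of Yin et al.\ (Proposition~\ref{prp:advrad}). The only divergence is bookkeeping: you split the supremum additively via subadditivity and handle the sign of $\sum_i\sigma_i$ through the nonnegativity of $\|\bm{w}\|_1$, whereas the paper folds both contributions into the single inner product $\braket{\bm{\gamma}, \bm{u}-\nu\,\mathrm{sgn}(\bm{u})}$ and applies H\"older plus the triangle inequality in $\ell_q$; your variant is arguably cleaner, since it avoids the identification of $\|\bm{\gamma}\|_1$ with $\braket{\bm{\gamma},\mathrm{sgn}(\bm{u})}$, which holds only at the optimizing $\bm{\gamma}$ rather than uniformly over the constraint set.
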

\begin{proof}
First, we derive the first inequality. We have
\begin{align}
\label{eq:rad1}
&\mathfrak{R}_{\mathcal{S}}\left(\left\{\max_{\bm{x}'_i\in\mathbb{B}^\infty_{\bm{x}_i}(\varepsilon)}y_i\bm{x}_i\bm{\zeta}(y_i):\|\bm{\zeta}(y_i)\|_p\leq W\right\}\right)\nonumber\\
&=\mathfrak{R}_{\mathcal{S}}\left(\left\{y_i\braket{\bm{x}_i, \bm{\zeta}(y_i)} - \varepsilon\|\bm{\zeta}\|_1:\|\bm{\zeta}(y_i)\|_p\leq W\right\}\right)\nonumber\\
&=\frac{1}{n}\mathbb{E}_{\sigma}\left[\sup_{\|\bm{\zeta}(y_i)\|_p\leq W}\sum^n_{i=1}\sigma_i\left(y_i\braket{\bm{x}_i, \bm{\zeta}(y_i)} - \varepsilon\|\bm{\zeta}\|_1\right)\right].
\end{align}
Then, using the same technique in the proof of Proposition~\ref{prp:advrad}, we bound (\ref{eq:rad1}) as follows:
\begin{align*}
&\mathfrak{R}_{\mathcal{S}}\left(\left\{\max_{\bm{x}'_i\in\mathbb{B}^\infty_{\bm{x}_i}(\varepsilon)}y_i\bm{x}_i\bm{\zeta}(y_i):\|\bm{\zeta}(y_i)\|_p\leq W\right\}\right)\leq \mathfrak{R}_{\mathcal{S}}\left(\left\{\bm{x}_i\bm{\zeta}(y_i):\|\bm{\zeta}(y_i)\|_p\leq W\right\}\right) + \varepsilon W \frac{d^{1/q}}{\sqrt{n}}.
\end{align*}
Similarly, we bound $\mathfrak{R}_{\mathcal{S}}\left(\left\{\min_{\bm{x}'_i\in\mathbb{B}^\infty_{\bm{x}_i}(\varepsilon)}\bm{x}_i\bm{\gamma}:\|\bm{\gamma}\|_p\leq W\right\}\right)$ as follows:
\begin{align*}
&\mathfrak{R}_{\mathcal{S}}\left(\left\{\min_{\bm{x}'_i\in\mathbb{B}^\infty_{\bm{x}_i}(\varepsilon)}\bm{x}_i\bm{\gamma}:\|\bm{\gamma}\|_p\leq W\right\}\right)\\
&=\frac{1}{n}\mathbb{E}_{\sigma}\left[\sup_{\|\bm{\gamma}\|_p\leq W}\sum^n_{i=1}\sigma_i\left(\braket{\bm{x}_i, \bm{\gamma}}\right) - \varepsilon\sum^n_{i=1}\sigma_i\|\bm{\gamma}\|_1\right]\\
&=\frac{1}{n}\mathbb{E}_{\sigma}\left[\sup_{\|\bm{\gamma}\|_p\leq W}\braket{\bm{u}, \bm{\gamma}} - \nu\braket{\bm{\gamma}, \mathrm{sgn}(\bm{u}})\right]\\
&=\frac{1}{n}\mathbb{E}_{\sigma}\left[\sup_{\|\bm{\gamma}\|_p\leq W}\braket{\bm{\gamma}, \bm{u}  - \nu\mathrm{sgn}(\bm{u}})\right]\\
&\leq \frac{W}{n}\mathbb{E}_{\sigma}\left[\left\|\sum^n_{i=1}\sigma_i\bm{x}_i - \varepsilon\sum^n_{i=1}\sigma_i\mathrm{sgn}\left(\sum^n_{i=1}\sigma_i\bm{x}_i\right)\right\|_q\right]\\
&\leq \mathfrak{R}_{\mathcal{S}}\left(\left\{\bm{x}_i\bm{\gamma}:\|\bm{\gamma}\|_p\leq W\right\}\right) + \frac{W}{n}\mathbb{E}_{\sigma}\left[\left|\varepsilon\sum^n_{i=1}\sigma_i\mathrm{sgn}\left(\sum^n_{i=1}\sigma_i\bm{x}_i\right)\right|\right]\\
&\leq \mathfrak{R}_{\mathcal{S}}\left(\left\{\bm{x}_i\bm{\gamma}:\|\bm{u}\|_p\leq W\right\}\right) + \frac{\varepsilon W d^{1/q}}{\sqrt{n}},
\end{align*}
where $\nu:=\varepsilon\sum^n_{i=1}\sigma_i$. 
\end{proof}
Next, we show the proof of Theorem~\ref{thm:genbound_conv}.
\begin{proof}[Proof of Theorem~\ref{thm:genbound_conv}]
Let $\widetilde{\ell}_{\mathrm{conv}, \mathcal{H}, \mathcal{R}}$ be the family of functions defined by 
\begin{align*}
&\widetilde{\ell}_{\mathrm{conv}, \mathcal{H}, \mathcal{R}} = \left\{(\bm{x}_i, y_i)\mapsto\min\left(\widetilde{\mathcal{L}}_{\mathrm{conv}}(r, f; \bm{x}_i, y_i), 1\right), r\in\mathcal{R}, f\times\mathcal{F}\right\}.
\end{align*}
Then, the Rademacher complexity of $\widetilde{\ell}_{\mathrm{conv}, \mathcal{H}, \mathcal{R}}$ can be written as follows:
\begin{align*}
&\mathfrak{R}(\widetilde{\ell}_{\mathrm{conv}, \mathcal{H}, \mathcal{R}})\\
& = \mathbb{E}\left[\sup_{h,r\in\mathcal{H}\times\mathcal{R}}\frac{1}{n}\sum^n_{i=1}\sigma_i\widetilde{\mathcal{L}}_{\mathrm{conv}, \mathcal{H}, \mathcal{R}}(r, f; \bm{x}_i, y_i)\right]\\
&\leq \mathbb{E}\left[\sup_{h,r\in\mathcal{H}\times\mathcal{R}}\frac{1}{n}\sum^n_{i=1}\sigma_i\Phi\left(\max_{\bm{x}'_i\in\mathbb{B}^\infty_{\bm{x}_i}(\varepsilon)}\left(\frac{\alpha}{2}y_i\bm{x}_i\bm{\zeta}(y_i)\right)\right)\right]+ \mathbb{E}\left[\sup_{h,r\in\mathcal{H}\times\mathcal{R}}\frac{1}{n}\sum^n_{i=1}\sigma_i\Psi\left(\max_{\bm{x}'_i\in\mathbb{B}^\infty_{\bm{x}_i}(\varepsilon)}\left(-c\beta \bm{x}_i\bm{\gamma})\right)\right)\right].
\end{align*}
Because $\Phi$ and $\Psi$ are $L$-Lipschitz functions, by Talagrand's contraction lemma \citep{LedouxTalagrand91book},
\begin{align*}
&\mathfrak{R}_{\mathcal{S}}(\widetilde{\ell}_{\mathrm{MH}, \mathcal{H}, \mathcal{R}})\\
&\leq \frac{\alpha L}{2}\mathfrak{R}_{\mathcal{S}}\left(\left\{\max_{\bm{x}'_i\in\mathbb{B}^\infty_{\bm{x}_i}(\varepsilon)}y_i\bm{x}_i\bm{\zeta}(y_i):\|\bm{\zeta}(y_i)\|_p\leq W\right\}\right) +\beta c L\mathfrak{R}_{\mathcal{S}}\left(\left\{\min_{\bm{x}'_i\in\mathbb{B}^\infty_{\bm{x}_i}(\varepsilon)}\bm{x}_i\bm{\gamma}:\|\bm{\gamma}\|_p\leq W\right\}\right).
\end{align*}
From Lemma~\ref{lmm:adv_rad_atro}, we have the result of Theorem~\ref{thm:genbound_conv}.
\end{proof}

\section{Additional Experiments with Linear-in-parameter Models}
\label{appdx:det_exp}
We also investigated the performance of ATRO with linear-in-parameter models.
In Section~\ref{sec:exp}, we show the results with an adversarial training parameter $\varepsilon=0.001$.
In this section, we additionally show the results with an adversarial training parameter $\varepsilon=0$ in Table~\ref{tbl:res_linear_2}, $\varepsilon=0.01$ in Table~\ref{tbl:res_linear_3}, and $\varepsilon=0.1$ in Table~\ref{tbl:res_linear_4}, respectively.
Other settings are identical to that of Section~\ref{sec:exp}.

In the case when the adversarial training parameter $\varepsilon=0$, there is no difference between the SVM and AT; there is also no difference between the MH and ATRO.
The results of Table~\ref{tbl:res_linear_2} reflects the fact.
As increasing the adversarial training parameter value, although ATRO becomes robust to adversarial attacks, the performance against the lower adversarial attacks decreased.
This tendency is common in adversarial training.
However, ATRO showed better performance than ATRO in many cases.

\begin{table*}[!ht]
    \caption{Results with linear-in-parameter models with an adversarial training parameter $\varepsilon=0$. We show the error (Err) and rejection rate (Rej) with their average (Mean) and standard deviations (SD). The lowest Err results are in bold.}
    \label{tbl:res_linear_2}
    \begin{center}

    \scalebox{0.83}[0.83]{
    \begin{tabular}{l|l|ll|ll|ll|ll|ll|ll|ll|ll}
\hline
 Dataset &     \multicolumn{17}{c}{australian}\\
 \hline
 Attack &     &   \multicolumn{4}{c|}{Attack $\varepsilon=0$}    &  \multicolumn{4}{c|}{Attack $\varepsilon=0.001$} &   \multicolumn{4}{c|}{Attack $\varepsilon=0.01$} &  \multicolumn{4}{c}{Attack $\varepsilon=0.1$} \\
 \multirow{2}{*}{Method} &  \multirow{2}{*}{Cost} &    Err &        &    Rej &        &    Err &        &    Rej &        &    Err &        &   Rej &       &    Err &      &   Rej &    \\
  &  & Mean &    Std &   Mean &    Std &   Mean &    Std &   Mean &    Std &   Mean &    Std &  Mean &   Std &   Mean &  Std &  Mean &  Std    \\
\hline
SVM & - &  0.204 &  0.017 & - & - &  0.335 &  0.025 &  - & - &  0.526 &  0.026 &  - & - &  \textbf{0.553} &  0.0 &  - & - \\
AT & - &  0.204 &  0.017 &  - & - &  0.335 &  0.025 &  - & - &  0.526 &  0.026 &  - & - &  \textbf{0.553} &  0.0 &  - & - \\
MH & 0.2 &  0.147 &  0.014 &  0.187 &  0.020 &  \textbf{0.199} &  0.018 &  0.057 &  0.023 &  0.497 &  0.030 &  0.0 &  0.0 &  \textbf{0.553} &  0.0 &  0.0 &  0.0 \\
      & 0.3 &  0.177 &  0.013 &  0.103 &  0.023 &  0.230 &  0.031 &  0.007 &  0.010 &  0.509 &  0.033 &  0.0 &  0.0 &  \textbf{0.553} &  0.0 &  0.0 &  0.0 \\
      & 0.4 &  0.185 &  0.006 &  0.060 &  0.024 &  0.241 &  0.020 &  0.002 &  0.003 &  0.511 &  0.025 &  0.0 &  0.0 &  \textbf{0.553} &  0.0 &  0.0 &  0.0 \\
ATRO & 0.2 &  \textbf{0.145} &  0.014 &  0.197 &  0.031 &  \textbf{0.199} &  0.024 &  0.058 &  0.023 &  \textbf{0.492} &  0.025 &  0.0 &  0.0 &  \textbf{0.553} &  0.0 &  0.0 &  0.0 \\
      & 0.3 &  0.170 &  0.008 &  0.086 &  0.017 &  0.232 &  0.024 &  0.017 &  0.014 &  0.505 &  0.013 &  0.0 &  0.0 &  \textbf{0.553} &  0.0 &  0.0 &  0.0 \\
      & 0.4 &  0.190 &  0.010 &  0.055 &  0.031 &  0.259 &  0.024 &  0.005 &  0.010 &  0.511 &  0.020 &  0.0 &  0.0 &  \textbf{0.553} &  0.0 &  0.0 &  0.0 \\
\bottomrule
\end{tabular}}
\end{center}

\begin{center}
\scalebox{0.83}[0.83]{
    \begin{tabular}{l|l|ll|ll|ll|ll|ll|ll|ll|ll}
\hline
 Dataset &     \multicolumn{17}{c}{diabetes}\\
 \hline
 Attack &     &   \multicolumn{4}{c|}{Attack $\varepsilon=0$}    &  \multicolumn{4}{c|}{Attack $\varepsilon=0.001$} &   \multicolumn{4}{c|}{Attack $\varepsilon=0.01$} &  \multicolumn{4}{c}{Attack $\varepsilon=0.1$} \\
 \multirow{2}{*}{Method} &  \multirow{2}{*}{Cost} &    Err &        &    Rej &        &    Err &        &    Rej &        &    Err &        &   Rej &       &    Err &      &   Rej &    \\
  &  & Mean &    Std &   Mean &    Std &   Mean &    Std &   Mean &    Std &   Mean &    Std &  Mean &   Std &   Mean &  Std &  Mean &  Std    \\
\hline
SVM & - &  0.292 &  0.011 &  - & - &  0.367 &  0.017 &  - & - &  \textbf{0.4} &  0.0 &  - & - &  \textbf{0.4} &  0.0 &  - & - \\
AT & - &  0.292 &  0.011 &  - & - &  0.367 &  0.017 &  - & - &  \textbf{0.4} &  0.0 &  - & - &  \textbf{0.4} &  0.0 &  - & - \\
MH & 0.2 &  0.173 &  0.004 &  0.751 &  0.012 &  \textbf{0.238} &  0.082 &  0.370 &  0.250 &  \textbf{0.4} &  0.0 &  0.0 &  0.0 &  \textbf{0.4} &  0.0 &  0.0 &  0.0 \\
      & 0.3 &  0.221 &  0.006 &  0.490 &  0.027 &  0.307 &  0.025 &  0.000 &  0.000 &  \textbf{0.4} &  0.0 &  0.0 &  0.0 &  \textbf{0.4} &  0.0 &  0.0 &  0.0 \\
      & 0.4 &  0.281 &  0.017 &  0.087 &  0.074 &  0.329 &  0.017 &  0.000 &  0.000 &  \textbf{0.4} &  0.0 &  0.0 &  0.0 &  \textbf{0.4} &  0.0 &  0.0 &  0.0 \\
ATRO & 0.2 &  \textbf{0.171} &  0.001 &  0.751 &  0.011 &  0.278 &  0.074 &  0.215 &  0.257 &  \textbf{0.4} &  0.0 &  0.0 &  0.0 &  \textbf{0.4} &  0.0 &  0.0 &  0.0 \\
      & 0.3 &  0.221 &  0.006 &  0.465 &  0.043 &  0.311 &  0.015 &  0.000 &  0.000 &  \textbf{0.4} &  0.0 &  0.0 &  0.0 &  \textbf{0.4} &  0.0 &  0.0 &  0.0 \\
      & 0.4 &  0.272 &  0.013 &  0.121 &  0.070 &  0.325 &  0.016 &  0.000 &  0.000 &  \textbf{0.4} &  0.0 &  0.0 &  0.0 &  \textbf{0.4} &  0.0 &  0.0 &  0.0 \\
\hline
\end{tabular}}
\end{center}

\begin{center}
\scalebox{0.83}[0.83]{
    \begin{tabular}{l|l|ll|ll|ll|ll|ll|ll|ll|ll}
\hline
 Dataset &     \multicolumn{17}{c}{cod-rna}\\
 \hline
 Attack &     &   \multicolumn{4}{c|}{Attack $\varepsilon=0$}    &  \multicolumn{4}{c|}{Attack $\varepsilon=0.001$} &   \multicolumn{4}{c|}{Attack $\varepsilon=0.01$} &  \multicolumn{4}{c}{Attack $\varepsilon=0.1$} \\
 \multirow{2}{*}{Method} &  \multirow{2}{*}{Cost} &    Err &        &    Rej &        &    Err &        &    Rej &        &    Err &        &   Rej &       &    Err &      &   Rej &    \\
  &  & Mean &    Std &   Mean &    Std &   Mean &    Std &   Mean &    Std &   Mean &    Std &  Mean &   Std &   Mean &  Std &  Mean &  Std    \\
\hline
SVM & - &  0.113 &  0.139 &  - & - &  0.115 &  0.141 &  - & - &  \textbf{0.243} &  0.320 &  - & - &  \textbf{0.288} &  0.353 &  - & - \\
AT & - &  0.113 &  0.139 &  - & - &  0.115 &  0.141 &  - & - &  \textbf{0.243} &  0.320 &  - & - &  \textbf{0.288} &  0.353 &  - & - \\
MH & 0.2 &  \textbf{0.079} &  0.097 &  0.397 &  0.486 &  \textbf{0.078} &  0.096 &  0.392 &  0.480 &  0.288 &  0.353 &  0.000 &  0.000 &  \textbf{0.288} &  0.353 &  0.0 &  0.0 \\
      & 0.3 &  0.119 &  0.146 &  0.357 &  0.451 &  0.149 &  0.201 &  0.294 &  0.448 &  0.281 &  0.345 &  0.000 &  0.000 &  \textbf{0.288} &  0.353 &  0.0 &  0.0 \\
      & 0.4 &  0.113 &  0.138 &  0.003 &  0.010 &  0.153 &  0.215 &  0.003 &  0.010 &  0.287 &  0.351 &  0.001 &  0.004 &  \textbf{0.288} &  0.353 &  0.0 &  0.0 \\
ATRO & 0.2 &  \textbf{0.079} &  0.097 &  0.393 &  0.481 &  0.079 &  0.097 &  0.389 &  0.477 &  0.287 &  0.352 &  0.000 &  0.000 &  \textbf{0.288} &  0.353 &  0.0 &  0.0 \\
      & 0.3 &  0.111 &  0.137 &  0.218 &  0.347 &  0.172 &  0.231 &  0.097 &  0.290 &  0.280 &  0.343 &  0.000 &  0.000 &  \textbf{0.288} &  0.353 &  0.0 &  0.0 \\
      & 0.4 &  0.104 &  0.128 &  0.003 &  0.008 &  0.268 &  0.330 &  0.003 &  0.008 &  0.288 &  0.353 &  0.000 &  0.000 &  \textbf{0.288} &  0.353 &  0.0 &  0.0 \\

\bottomrule
\end{tabular}}
\end{center}

\begin{center}
\scalebox{0.83}[0.83]{
    \begin{tabular}{l|l|ll|ll|ll|ll|ll|ll|ll|ll}
\hline
 Dataset &     \multicolumn{17}{c}{skin}\\
 \hline
 Attack &     &   \multicolumn{4}{c|}{Attack $\varepsilon=0$}    &  \multicolumn{4}{c|}{Attack $\varepsilon=0.001$} &   \multicolumn{4}{c|}{Attack $\varepsilon=0.01$} &  \multicolumn{4}{c}{Attack $\varepsilon=0.1$} \\
 \multirow{2}{*}{Method} &  \multirow{2}{*}{Cost} &    Err &        &    Rej &        &    Err &        &    Rej &        &    Err &        &   Rej &       &    Err &      &   Rej &    \\
  &  & Mean &    Std &   Mean &    Std &   Mean &    Std &   Mean &    Std &   Mean &    Std &  Mean &   Std &   Mean &  Std &  Mean &  Std    \\
\hline
SVM & - &  0.009 &  0.004 &  - & - &  0.010 &  0.004 &  - & - &  0.203 &  0.308 &  - & - &  0.761 &  0.073 &  - & - \\
AT & - &  0.009 &  0.004 &  - &  - &  0.010 &  0.004 &  - &  - &  0.203 &  0.308 &  - &  - &  0.761 &  0.073 &  - &  - \\
MH & 0.2 &  0.007 &  0.006 &  0.0 &  0.0 &  0.011 &  0.007 &  0.0 &  0.0 &  0.387 &  0.371 &  0.0 &  0.0 &  0.748 &  0.113 &  0.0 &  0.0 \\
      & 0.3 &  \textbf{0.005} &  0.004 &  0.0 &  0.0 &  0.006 &  0.005 &  0.0 &  0.0 &  0.201 &  0.288 &  0.0 &  0.0 &  \textbf{0.666} &  0.138 &  0.0 &  0.0 \\
      & 0.4 &  \textbf{0.005} &  0.004 &  0.0 &  0.0 &  0.008 &  0.006 &  0.0 &  0.0 &  0.321 &  0.326 &  0.0 &  0.0 &  0.740 &  0.149 &  0.0 &  0.0 \\
ATRO & 0.2 &  0.006 &  0.002 &  0.0 &  0.0 &  0.007 &  0.005 &  0.0 &  0.0 &  0.161 &  0.297 &  0.0 &  0.0 &  0.749 &  0.094 &  0.0 &  0.0 \\
      & 0.3 &  0.007 &  0.003 &  0.0 &  0.0 &  0.012 &  0.009 &  0.0 &  0.0 &  0.237 &  0.357 &  0.0 &  0.0 &  0.737 &  0.113 &  0.0 &  0.0 \\
      & 0.4 &  0.006 &  0.005 &  0.0 &  0.0 &  \textbf{0.005} &  0.004 &  0.0 &  0.0 &  \textbf{0.006} &  0.004 &  0.0 &  0.0 &  0.707 &  0.113 &  0.0 &  0.0 \\
\hline
\end{tabular}}
\end{center}

\end{table*}

\begin{table*}[!ht]
    \caption{Results with linear-in-parameter models with an adversarial training parameter $\varepsilon=0.01$. We show the error (Err) and rejection rate (Rej) with their average (Mean) and standard deviations (SD). The lowest Err results are in bold. }
    \label{tbl:res_linear_3}
    \begin{center}

    \scalebox{0.83}[0.83]{
    \begin{tabular}{l|l|ll|ll|ll|ll|ll|ll|ll|ll}
\hline
 Dataset &     \multicolumn{17}{c}{australian}\\
 \hline
 Attack &     &   \multicolumn{4}{c|}{Attack $\varepsilon=0$}    &  \multicolumn{4}{c|}{Attack $\varepsilon=0.001$} &   \multicolumn{4}{c|}{Attack $\varepsilon=0.01$} &  \multicolumn{4}{c}{Attack $\varepsilon=0.1$} \\
 \multirow{2}{*}{Method} &  \multirow{2}{*}{Cost} &    Err &        &    Rej &        &    Err &        &    Rej &        &    Err &        &   Rej &       &    Err &      &   Rej &    \\
  &  & Mean &    Std &   Mean &    Std &   Mean &    Std &   Mean &    Std &   Mean &    Std &  Mean &   Std &   Mean &  Std &  Mean &  Std    \\
\hline
SVM & - &  0.197 &  0.010 &  - & - &  0.311 &  0.024 &  - & - &  0.535 &  0.018 &  - & - &  0.553 &  0.000 &  - & - \\
AT & - &  0.180 &  0.000 & - & - &  0.180 &  0.000 &  - & - &  0.180 &  0.000 &  - & - &  0.517 &  0.019 &  - & - \\
MH & 0.2 &  0.147 &  0.010 &  0.192 &  0.026 &  0.202 &  0.016 &  0.058 &  0.021 &  0.508 &  0.024 &  0.000 &  0.000 &  0.553 &  0.000 &  0.0 &  0.0 \\
      & 0.3 &  0.168 &  0.009 &  0.107 &  0.024 &  0.225 &  0.029 &  0.015 &  0.009 &  0.499 &  0.029 &  0.000 &  0.000 &  0.553 &  0.000 &  0.0 &  0.0 \\
      & 0.4 &  0.186 &  0.012 &  0.047 &  0.021 &  0.237 &  0.023 &  0.002 &  0.003 &  0.509 &  0.028 &  0.000 &  0.000 &  0.553 &  0.000 &  0.0 &  0.0 \\
ATRO & 0.2 &  \textbf{0.128} &  0.000 &  0.173 &  0.000 &  \textbf{0.128} &  0.000 &  0.173 &  0.000 &  \textbf{0.128} &  0.000 &  0.173 &  0.000 &  0.542 &  0.007 &  0.0 &  0.0 \\
      & 0.3 &  0.145 &  0.000 &  0.173 &  0.000 &  0.145 &  0.000 &  0.173 &  0.000 &  0.145 &  0.000 &  0.173 &  0.000 &  \textbf{0.437} &  0.051 &  0.0 &  0.0 \\
      & 0.4 &  0.175 &  0.004 &  0.032 &  0.024 &  0.176 &  0.004 &  0.031 &  0.024 &  0.178 &  0.003 &  0.017 &  0.016 &  0.464 &  0.028 &  0.0 &  0.0 \\
\bottomrule
\end{tabular}}
\end{center}

\begin{center}
\scalebox{0.83}[0.83]{
    \begin{tabular}{l|l|ll|ll|ll|ll|ll|ll|ll|ll}
\hline
 Dataset &     \multicolumn{17}{c}{diabetes}\\
 \hline
 Attack &     &   \multicolumn{4}{c|}{Attack $\varepsilon=0$}    &  \multicolumn{4}{c|}{Attack $\varepsilon=0.001$} &   \multicolumn{4}{c|}{Attack $\varepsilon=0.01$} &  \multicolumn{4}{c}{Attack $\varepsilon=0.1$} \\
 \multirow{2}{*}{Method} &  \multirow{2}{*}{Cost} &    Err &        &    Rej &        &    Err &        &    Rej &        &    Err &        &   Rej &       &    Err &      &   Rej &    \\
  &  & Mean &    Std &   Mean &    Std &   Mean &    Std &   Mean &    Std &   Mean &    Std &  Mean &   Std &   Mean &  Std &  Mean &  Std    \\
\hline
SVM & - &  0.298 &  0.013 &  - & - &  0.369 &  0.022 &  - & - &  0.400 &  0.000 &  - & - &  0.400 &  0.000 &  - & - \\
AT & - &  0.309 &  0.015 &  - & - &  0.312 &  0.017 &  - & - &  0.324 &  0.014 &  - & - &  0.399 &  0.002 &  - & - \\
MH & 0.2 &  \textbf{0.171} &  0.002 &  0.751 &  0.012 &  0.285 &  0.069 &  0.195 &  0.238 &  0.400 &  0.000 &  0.000 &  0.000 &  0.400 &  0.000 &  0.000 &  0.000 \\
      & 0.3 &  0.223 &  0.007 &  0.485 &  0.038 &  0.313 &  0.023 &  0.000 &  0.000 &  0.400 &  0.000 &  0.000 &  0.000 &  0.400 &  0.000 &  0.000 &  0.000 \\
      & 0.4 &  0.269 &  0.020 &  0.127 &  0.072 &  0.331 &  0.015 &  0.000 &  0.000 &  0.400 &  0.000 &  0.000 &  0.000 &  0.400 &  0.000 &  0.000 &  0.000 \\
ATRO & 0.2 &  0.191 &  0.006 &  0.953 &  0.032 &  \textbf{0.191} &  0.006 &  0.953 &  0.032 &  \textbf{0.191} &  0.006 &  0.950 &  0.033 &  \textbf{0.190} &  0.007 &  0.907 &  0.060 \\
      & 0.3 &  0.244 &  0.003 &  0.686 &  0.016 &  0.242 &  0.004 &  0.673 &  0.022 &  0.234 &  0.006 &  0.563 &  0.019 &  0.376 &  0.013 &  0.037 &  0.024 \\
      & 0.4 &  0.337 &  0.007 &  0.793 &  0.022 &  0.335 &  0.007 &  0.788 &  0.020 &  0.319 &  0.010 &  0.709 &  0.043 &  0.384 &  0.016 &  0.033 &  0.037 \\
\hline
\end{tabular}}
\end{center}

\begin{center}
\scalebox{0.83}[0.83]{
    \begin{tabular}{l|l|ll|ll|ll|ll|ll|ll|ll|ll}
\hline
 Dataset &     \multicolumn{17}{c}{cod-rna}\\
 \hline
 Attack &     &   \multicolumn{4}{c|}{Attack $\varepsilon=0$}    &  \multicolumn{4}{c|}{Attack $\varepsilon=0.001$} &   \multicolumn{4}{c|}{Attack $\varepsilon=0.01$} &  \multicolumn{4}{c}{Attack $\varepsilon=0.1$} \\
 \multirow{2}{*}{Method} &  \multirow{2}{*}{Cost} &    Err &        &    Rej &        &    Err &        &    Rej &        &    Err &        &   Rej &       &    Err &      &   Rej &    \\
  &  & Mean &    Std &   Mean &    Std &   Mean &    Std &   Mean &    Std &   Mean &    Std &  Mean &   Std &   Mean &  Std &  Mean &  Std    \\
\hline
SVM & - &  0.114 &  0.141 &  - & - &  0.147 &  0.213 &  - & - &  0.243 &  0.321 &  - & - &  0.288 &  0.353 &  - & - \\
AT & - &  0.112 &  0.137 &  - & - &  0.112 &  0.137 &  - & - &  0.112 &  0.137 &  - & - &  0.112 &  0.137 &  - & - \\
MH & 0.2 &  \textbf{0.078} &  0.096 &  0.391 &  0.479 &  \textbf{0.078} &  0.095 &  0.386 &  0.473 &  0.288 &  0.353 &  0.0 &  0.00 &  0.288 &  0.353 &  0.0 &  0.00 \\
      & 0.3 &  0.112 &  0.138 &  0.306 &  0.397 &  0.144 &  0.190 &  0.247 &  0.388 &  0.281 &  0.344 &  0.0 &  0.00 &  0.288 &  0.353 &  0.0 &  0.00 \\
      & 0.4 &  0.083 &  0.126 &  0.000 &  0.000 &  0.083 &  0.126 &  0.000 &  0.000 &  0.172 &  0.286 &  0.0 &  0.00 &  0.216 &  0.330 &  0.0 &  0.00 \\
ATRO & 0.2 &  0.080 &  0.098 &  0.400 &  0.490 &  0.080 &  0.098 &  0.400 &  0.490 &  \textbf{0.080} &  0.098 &  0.4 &  0.49 &  \textbf{0.080} &  0.098 &  0.4 &  0.49 \\
      & 0.3 &  0.120 &  0.147 &  0.400 &  0.490 &  0.120 &  0.147 &  0.400 &  0.490 &  0.120 &  0.147 &  0.4 &  0.49 &  0.120 &  0.147 &  0.4 &  0.49 \\
      & 0.4 &  0.084 &  0.128 &  0.000 &  0.000 &  0.084 &  0.128 &  0.000 &  0.000 &  0.084 &  0.128 &  0.0 &  0.00 &  0.084 &  0.128 &  0.0 &  0.00 \\
\bottomrule
\end{tabular}}
\end{center}

\begin{center}
\scalebox{0.83}[0.83]{
    \begin{tabular}{l|l|ll|ll|ll|ll|ll|ll|ll|ll}
\hline
 Dataset &     \multicolumn{17}{c}{skin}\\
 \hline
 Attack &     &   \multicolumn{4}{c|}{Attack $\varepsilon=0$}    &  \multicolumn{4}{c|}{Attack $\varepsilon=0.001$} &   \multicolumn{4}{c|}{Attack $\varepsilon=0.01$} &  \multicolumn{4}{c}{Attack $\varepsilon=0.1$} \\
 \multirow{2}{*}{Method} &  \multirow{2}{*}{Cost} &    Err &        &    Rej &        &    Err &        &    Rej &        &    Err &        &   Rej &       &    Err &      &   Rej &    \\
  &  & Mean &    Std &   Mean &    Std &   Mean &    Std &   Mean &    Std &   Mean &    Std &  Mean &   Std &   Mean &  Std &  Mean &  Std    \\
\hline
SVM & - &  0.007 &  0.003 &  - & - &  0.012 &  0.009 &  - & - &  0.287 &  0.323 &  - & - &  0.770 &  0.083 &  - & - \\
AT & - &  0.034 &  0.038 &  - &  - &  0.034 &  0.036 &  - &  - &  0.029 &  0.038 &  - &  - &  0.775 &  0.065 &  - &  - \\
MH & 0.2 &  0.007 &  0.004 &  0.000 &  0.000 &  0.010 &  0.005 &  0.0 &  0.0 &  0.534 &  0.350 &  0.0 &  0.0 &  0.774 &  0.094 &  0.0 &  0.0 \\
      & 0.3 &  0.008 &  0.003 &  0.000 &  0.000 &  0.007 &  0.006 &  0.0 &  0.0 &  0.081 &  0.207 &  0.0 &  0.0 &  \textbf{0.661} &  0.149 &  0.0 &  0.0 \\
      & 0.4 &  \textbf{0.006} &  0.004 &  0.001 &  0.002 &  0.009 &  0.006 &  0.0 &  0.0 &  0.149 &  0.287 &  0.0 &  0.0 &  0.747 &  0.086 &  0.0 &  0.0 \\
ATRO & 0.2 &  0.007 &  0.003 &  0.000 &  0.000 &  \textbf{0.006} &  0.004 &  0.0 &  0.0 &  0.005 &  0.005 &  0.0 &  0.0 &  0.814 &  0.018 &  0.0 &  0.0 \\
      & 0.3 &  0.007 &  0.004 &  0.000 &  0.000 &  0.006 &  0.004 &  0.0 &  0.0 &  \textbf{0.003} &  0.004 &  0.0 &  0.0 &  0.799 &  0.042 &  0.0 &  0.0 \\
      & 0.4 &  0.007 &  0.004 &  0.000 &  0.000 &  0.007 &  0.004 &  0.0 &  0.0 &  0.004 &  0.005 &  0.0 &  0.0 &  0.809 &  0.030 &  0.0 &  0.0 \\
\hline
\end{tabular}}
\end{center}

\end{table*}

\begin{table*}[!ht]
    \caption{Results with linear-in-parameter models with an adversarial training parameter $\varepsilon=0.1$. We show the error (Err) and rejection rate (Rej) with their average (Mean) and standard deviations (SD). The lowest Err results are in bold. }
    \label{tbl:res_linear_4}
    \begin{center}

    \scalebox{0.83}[0.83]{
    \begin{tabular}{l|l|ll|ll|ll|ll|ll|ll|ll|ll}
\hline
 Dataset &     \multicolumn{17}{c}{australian}\\
 \hline
 Attack &     &   \multicolumn{4}{c|}{Attack $\varepsilon=0$}    &  \multicolumn{4}{c|}{Attack $\varepsilon=0.001$} &   \multicolumn{4}{c|}{Attack $\varepsilon=0.01$} &  \multicolumn{4}{c}{Attack $\varepsilon=0.1$} \\
 \multirow{2}{*}{Method} &  \multirow{2}{*}{Cost} &    Err &        &    Rej &        &    Err &        &    Rej &        &    Err &        &   Rej &       &    Err &      &   Rej &    \\
  &  & Mean &    Std &   Mean &    Std &   Mean &    Std &   Mean &    Std &   Mean &    Std &  Mean &   Std &   Mean &  Std &  Mean &  Std    \\
\hline
SVM & - &  0.199 &  0.008 &  - & - &  0.315 &  0.012 &  - & - &  0.529 &  0.025 &  - & - &  0.553 &  0.000 &  - & - \\
AT & - &  0.202 &  0.026 &  - & - &  0.201 &  0.025 &  - & - &  0.226 &  0.089 &  - & - &  0.325 &  0.150 &  - & - \\
MH & 0.2 &  \textbf{0.149} &  0.010 &  0.188 &  0.013 &  0.201 &  0.012 &  0.060 &  0.013 &  0.488 &  0.025 &  0.000 &  0.000 &  0.553 &  0.000 &  0.000 &  0.000 \\
      & 0.3 &  0.175 &  0.008 &  0.093 &  0.029 &  0.249 &  0.033 &  0.008 &  0.011 &  0.497 &  0.029 &  0.000 &  0.000 &  0.553 &  0.000 &  0.000 &  0.000 \\
      & 0.4 &  0.191 &  0.010 &  0.050 &  0.040 &  0.251 &  0.021 &  0.001 &  0.002 &  0.503 &  0.028 &  0.000 &  0.000 &  0.553 &  0.000 &  0.000 &  0.000 \\
ATRO & 0.2 &  0.200 &  0.000 &  1.000 &  0.000 &  \textbf{0.200} &  0.000 &  1.000 &  0.000 &  \textbf{0.200} &  0.000 &  1.000 &  0.000 &  \textbf{0.200} &  0.000 &  0.996 &  0.010 \\
      & 0.3 &  0.300 &  0.000 &  1.000 &  0.000 &  0.300 &  0.000 &  1.000 &  0.000 &  0.300 &  0.000 &  1.000 &  0.000 &  0.299 &  0.002 &  0.996 &  0.012 \\
      & 0.4 &  0.396 &  0.013 &  0.979 &  0.062 &  0.396 &  0.013 &  0.979 &  0.062 &  0.395 &  0.016 &  0.965 &  0.106 &  0.355 &  0.068 &  0.719 &  0.393 \\
\bottomrule
\end{tabular}}
\end{center}

\begin{center}
\scalebox{0.83}[0.83]{
    \begin{tabular}{l|l|ll|ll|ll|ll|ll|ll|ll|ll}
\hline
 Dataset &     \multicolumn{17}{c}{diabetes}\\
 \hline
 Attack &     &   \multicolumn{4}{c|}{Attack $\varepsilon=0$}    &  \multicolumn{4}{c|}{Attack $\varepsilon=0.001$} &   \multicolumn{4}{c|}{Attack $\varepsilon=0.01$} &  \multicolumn{4}{c}{Attack $\varepsilon=0.1$} \\
 \multirow{2}{*}{Method} &  \multirow{2}{*}{Cost} &    Err &        &    Rej &        &    Err &        &    Rej &        &    Err &        &   Rej &       &    Err &      &   Rej &    \\
  &  & Mean &    Std &   Mean &    Std &   Mean &    Std &   Mean &    Std &   Mean &    Std &  Mean &   Std &   Mean &  Std &  Mean &  Std    \\
\hline
SVM & - &  0.292 &  0.010 &  - & - &  0.369 &  0.017 &  - & - &  0.400 &  0.000 &  - & - &  0.400 &  0.000 &  - & - \\
AT & - &  0.400 &  0.000 &  - & - &  0.400 &  0.000 &  - & - &  0.400 &  0.000 &  - & - &  0.400 &  0.000 &  - & - \\
MH & 0.2 &  \textbf{0.171} &  0.002 &  0.749 &  0.012 &  0.297 &  0.080 &  0.177 &  0.254 &  0.400 &  0.000 &  0.000 &  0.000 &  0.400 &  0.000 &  0.000 &  0.000 \\
      & 0.3 &  0.227 &  0.011 &  0.472 &  0.072 &  0.312 &  0.033 &  0.001 &  0.002 &  0.400 &  0.000 &  0.000 &  0.000 &  0.400 &  0.000 &  0.000 &  0.000 \\
      & 0.4 &  0.271 &  0.016 &  0.130 &  0.064 &  0.331 &  0.020 &  0.000 &  0.000 &  0.400 &  0.000 &  0.000 &  0.000 &  0.400 &  0.000 &  0.000 &  0.000 \\
ATRO & 0.2 &  0.200 &  0.000 &  1.000 &  0.000 &  \textbf{0.200} &  0.000 &  1.000 &  0.000 &  \textbf{0.200} &  0.000 &  1.000 &  0.000 &  \textbf{0.236} &  0.072 &  0.843 &  0.329 \\
      & 0.3 &  0.310 &  0.030 &  0.899 &  0.300 &  0.310 &  0.030 &  0.899 &  0.300 &  0.310 &  0.030 &  0.899 &  0.300 &  0.320 &  0.040 &  0.798 &  0.399 \\
      & 0.4 &  0.347 &  0.030 &  0.353 &  0.320 &  0.354 &  0.034 &  0.311 &  0.305 &  0.393 &  0.026 &  0.123 &  0.293 &  0.400 &  0.000 &  0.100 &  0.300 \\
\hline
\end{tabular}}
\end{center}

\begin{center}
\scalebox{0.83}[0.83]{
    \begin{tabular}{l|l|ll|ll|ll|ll|ll|ll|ll|ll}
\hline
 Dataset &     \multicolumn{17}{c}{cod-rna}\\
 \hline
 Attack &     &   \multicolumn{4}{c|}{Attack $\varepsilon=0$}    &  \multicolumn{4}{c|}{Attack $\varepsilon=0.001$} &   \multicolumn{4}{c|}{Attack $\varepsilon=0.01$} &  \multicolumn{4}{c}{Attack $\varepsilon=0.1$} \\
 \multirow{2}{*}{Method} &  \multirow{2}{*}{Cost} &    Err &        &    Rej &        &    Err &        &    Rej &        &    Err &        &   Rej &       &    Err &      &   Rej &    \\
  &  & Mean &    Std &   Mean &    Std &   Mean &    Std &   Mean &    Std &   Mean &    Std &  Mean &   Std &   Mean &  Std &  Mean &  Std    \\
\hline
SVM & - &  0.081 &  0.123 &  - & - &  0.121 &  0.214 &  - & - &  0.215 &  0.328 &  - & - &  0.216 &  0.330 &  - & - \\
AT & - &  0.084 &  0.128 &  - &  - &  0.084 &  0.128 &  - &  - &  0.084 &  0.128 &  - &  - &  0.128 &  0.226 &  - &  - \\
MH & 0.2 &  0.059 &  0.090 &  0.296 &  0.452 &  0.059 &  0.090 &  0.293 &  0.448 &  0.215 &  0.328 &  0.0 &  0.000 &  0.216 &  0.330 &  0.0 &  0.000 \\
      & 0.3 &  0.085 &  0.131 &  0.078 &  0.209 &  0.120 &  0.204 &  0.069 &  0.206 &  0.215 &  0.328 &  0.0 &  0.000 &  0.216 &  0.330 &  0.0 &  0.000 \\
      & 0.4 &  0.084 &  0.128 &  0.000 &  0.000 &  0.084 &  0.128 &  0.000 &  0.000 &  0.215 &  0.329 &  0.0 &  0.000 &  0.216 &  0.330 &  0.0 &  0.000 \\
ATRO & 0.2 &  \textbf{0.060} &  0.092 &  0.300 &  0.458 &  \textbf{0.060} &  0.092 &  0.300 &  0.458 &  \textbf{0.060} &  0.092 &  0.3 &  0.458 &  \textbf{0.060} &  0.092 &  0.3 &  0.458 \\
      & 0.3 &  0.090 &  0.137 &  0.300 &  0.458 &  0.090 &  0.137 &  0.300 &  0.458 &  0.090 &  0.137 &  0.3 &  0.458 &  0.090 &  0.137 &  0.3 &  0.458 \\
      & 0.4 &  0.108 &  0.168 &  0.200 &  0.400 &  0.108 &  0.168 &  0.200 &  0.400 &  0.108 &  0.168 &  0.2 &  0.400 &  0.108 &  0.168 &  0.2 &  0.400 \\
\bottomrule
\end{tabular}}
\end{center}

\begin{center}
\scalebox{0.83}[0.83]{
    \begin{tabular}{l|l|ll|ll|ll|ll|ll|ll|ll|ll}
\hline
 Dataset &     \multicolumn{17}{c}{skin}\\
 \hline
 Attack &     &   \multicolumn{4}{c|}{Attack $\varepsilon=0$}    &  \multicolumn{4}{c|}{Attack $\varepsilon=0.001$} &   \multicolumn{4}{c|}{Attack $\varepsilon=0.01$} &  \multicolumn{4}{c}{Attack $\varepsilon=0.1$} \\
 \multirow{2}{*}{Method} &  \multirow{2}{*}{Cost} &    Err &        &    Rej &        &    Err &        &    Rej &        &    Err &        &   Rej &       &    Err &      &   Rej &    \\
  &  & Mean &    Std &   Mean &    Std &   Mean &    Std &   Mean &    Std &   Mean &    Std &  Mean &   Std &   Mean &  Std &  Mean &  Std    \\
\hline
SVM & - &  0.008 &  0.004 &  - & - &  0.015 &  0.015 &  - & - &  0.327 &  0.395 &  - & - &  0.761 &  0.111 &  - & - \\
AT & - &  0.237 &  0.209 &  - & - &  0.227 &  0.213 &  - & - &  0.313 &  0.208 &  - & - &  0.583 &  0.195 &  - & - \\
MH & 0.2 &  \textbf{0.008} &  0.004 &  0.001 &  0.002 &  \textbf{0.011} &  0.008 &  0.000 &  0.000 &  0.278 &  0.332 &  0.000 &  0.000 &  0.693 &  0.130 &  0.000 &  0.000 \\
      & 0.3 &  0.012 &  0.003 &  0.001 &  0.002 &  0.014 &  0.005 &  0.001 &  0.002 &  0.276 &  0.316 &  0.000 &  0.000 &  0.738 &  0.103 &  0.000 &  0.000 \\
      & 0.4 &  0.009 &  0.006 &  0.000 &  0.000 &  0.012 &  0.006 &  0.000 &  0.000 &  0.703 &  0.233 &  0.000 &  0.000 &  0.795 &  0.074 &  0.000 &  0.000 \\
ATRO & 0.2 &  0.145 &  0.030 &  0.725 &  0.150 &  0.142 &  0.031 &  0.712 &  0.153 &  \textbf{0.117} &  0.037 &  0.583 &  0.185 &  0.234 &  0.077 &  0.193 &  0.274 \\
      & 0.3 &  0.161 &  0.076 &  0.365 &  0.357 &  0.162 &  0.076 &  0.360 &  0.355 &  0.150 &  0.068 &  0.317 &  0.320 &  \textbf{0.141} &  0.059 &  0.097 &  0.035 \\
      & 0.4 &  0.224 &  0.073 &  0.279 &  0.290 &  0.219 &  0.070 &  0.269 &  0.279 &  0.200 &  0.054 &  0.212 &  0.233 &  0.147 &  0.028 &  0.055 &  0.047 \\
\hline
\end{tabular}}
\end{center}

\end{table*}

\end{document}